\documentclass[3p,11pt]{elsarticle}
\usepackage[utf8]{inputenc}
\usepackage{diagbox}
\usepackage{adjustbox}
\usepackage{geometry}
\usepackage{tabularx}
\usepackage{rotating}
\usepackage{multirow}
\RequirePackage{threeparttable}
\RequirePackage{longtable}
\RequirePackage{makecell} 
\usepackage{adjustbox}
\usepackage{amsthm}
\usepackage{thmtools, thm-restate}
\usepackage{hyperref}
\usepackage{amsmath}
\usepackage{bbm}
\DeclareMathOperator*{\argmax}{arg\,max}

\graphicspath{{Figures/}}

\journal{European Journal of Operational Research}

\makeatletter
\def\ps@pprintTitle{%
  \let\@oddhead\@empty
  \let\@evenhead\@empty
  \def\@oddfoot{\parbox{\textwidth}{\footnotesize\itshape
    Accepted manuscript. Published in European Journal of 
    Operational Research, 318(2), 408--423 (2024). 
    \href{https://doi.org/10.1016/j.ejor.2024.05.027}{https://doi.org/10.1016/j.ejor.2024.05.027}}}%
  \let\@evenfoot\@oddfoot
}
\makeatother
\RequirePackage{amsmath,amsthm,amsopn,amstext,amsfonts,amssymb, mathrsfs}
\newtheorem{lemma}{Lemma}

\RequirePackage{algorithm,algorithmicx,algpseudocode}

\RequirePackage{multirow,fancyhdr} 
\RequirePackage{indentfirst}
\RequirePackage{graphics,graphicx,fancybox,color,float,subcaption}
\usepackage{float}
\usepackage{comment}
\usepackage{setspace}
\usepackage{subcaption}

\allowdisplaybreaks

\RequirePackage{multirow}
\RequirePackage{threeparttable}
\RequirePackage{longtable}
\RequirePackage{makecell} 

\RequirePackage{enumerate}
\RequirePackage{soul}

\usepackage[dvipsnames]{xcolor}

\begin{document}

\makeatletter
\def\elsLabel#1{\@bsphack\protected@write\@auxout{}%
  {\string\Newlabel{#1}{\@currentlabel}}\@esphack}
\def\Newlabel#1#2{\expandafter\xdef\csname X@#1\endcsname{#2}}
\def\Ref#1{\@ifundefined{X@#1}{0}{\csname X@#1\endcsname}}
\makeatother

\begin{frontmatter}

\tnotetext[t1]{© 2024. This manuscript version is made available 
under the CC-BY-NC-ND 4.0 license 
\href{https://creativecommons.org/licenses/by-nc-nd/4.0/}{https://creativecommons.org/licenses/by-nc-nd/4.0/}}

\title{Dynamic resource matching in manufacturing
using deep reinforcement learning\tnoteref{t1}}

\author[UH]{Saunak Kumar Panda}
\ead{spanda@uh.edu}
\author[UH]{Yisha Xiang}
\ead{yxiang4@uh.edu}
\author[TTU]{Ruiqi Liu}
\ead{ruiqliu@ttu.edu}

\address[UH]{Department of Industrial Engineering, University of Houston, Houston, TX 77004, USA}
\address[TTU]{Department of Mathematics and Statistics, Texas Tech University, Lubbock, TX 79409, USA}

\begin{abstract}
Matching plays an important role in the logical allocation of resources across a wide range of industries. The benefits of matching have been increasingly recognized in manufacturing industries. In particular, capacity sharing has received much attention recently. In this paper, we consider the problem of dynamically matching demand-capacity types of manufacturing resources. We formulate the multi-period, many-to-many manufacturing resource-matching problem as a sequential decision process. The formulated manufacturing resource-matching problem involves large state and action spaces, and it is not practical to accurately model the joint distribution of various types of demands. To address the curse of dimensionality and the difficulty of explicitly modeling the transition dynamics, we use a model-free deep reinforcement learning approach to find optimal matching policies. Moreover, to tackle the issue of infeasible actions and slow convergence due to initial biased estimates caused by the maximum operator in Q-learning, we introduce two penalties to the traditional Q-learning algorithm: a domain knowledge-based penalty based on a prior policy and an infeasibility penalty that conforms to the demand-supply constraints. We establish theoretical results on the convergence of our domain knowledge-informed Q-learning providing performance guarantee for small-size problems. For large-size problems, we further inject our modified approach into the deep deterministic policy gradient (DDPG) algorithm, which we refer to as domain knowledge-informed DDPG (DKDDPG). In our computational study, including small- and large-scale experiments, DKDDPG consistently outperformed traditional DDPG and other RL algorithms, yielding higher rewards and demonstrating greater efficiency in time and episodes.
\end{abstract}

\begin{keyword}
Assignment, Matching problem, Manufacturing, Markov decision process, Deep reinforcement Learning
\end{keyword}

\end{frontmatter}

\section{Introduction}
Matching plays an important role in the logical allocation of resources across a wide range of industries such as transportation, college admissions \citep{roth1989college}, organ allocation \citep{roth2004kidney}, and online dating. In the transportation sector, matching is the core issue in ride-sharing and its many variants (e.g.,  carpooling, P2P (peer-to-peer) ride-sharing). Ride-sharing has successfully promoted sustainable transportation, reduced
car utilization, increased vehicle occupancy, and public transit among other benefits \citep{mitropoulos2021systematic}. Matching also plays a critical role in organ allocation with the most common example being kidney allocation, where donors and patients are matched based on their compatibility which depends on factors such as organ quality and patient condition.

The benefits of matching have been increasingly recognized in manufacturing industries. In particular, capacity sharing has received much attention recently. Capacity investment is expensive across manufacturing sectors (e.g., semiconductors, and consumer electronics). For example, a new semiconductor fab costs one to four billion dollars to build, and the price for a single machine may be as high as four to five million dollars with a high obsolescence rate \citep{renna2011capacity, wu2005tool}.  In recent years, manufactured products have had a short product life cycle and high demand volatility, making the capacity investment not only expensive but also risky. Capacity sharing provides a viable solution to address the capacity limit facing small manufacturers and helps alleviate the cost burdens large manufacturers carry. Further accelerating the growth of the capacity sharing market is the recent paradigm shift in the manufacturing sector to digital and cloud manufacturing \citep{liu2018cloud} that allows users to request services ranging from product design, manufacturing, testing, management and all other stages of a product life-cycle through the cloud. This provides the critical technical infrastructure for crowd-sourcing and matching between customers and manufacturers. 

A few recent works have studied how to optimize matching in manufacturing. \citet{yang2021matchingam} study a two-sided additive-manufacturing (AM) market for one period and design a bipartite matching framework to match customers with manufacturers. \citet{pahwa2020maas} consider the bipartite matching problem in manufacturing-as-a-service marketplaces in a dynamic environment and propose approximate stable matching algorithms to optimize the revenue for the marketplace platform.  While there have been some recent advances in this field, matching problems in manufacturing industries are still largely under-explored.

To meet the increasing need for resource sharing in manufacturing, we consider a dynamic manufacturing resource matching problem in a finite-time horizon. Although sharing some general characteristics with common matching problems such as heterogeneous supply-demand types and similar reward structures, resource matching in manufacturing is distinctively different in several key aspects. First, the matching of the types needs to be optimized over a finite horizon. That is, at each decision period, there are demands for manufacturing capacities that need to be fulfilled, and this matching process evolves through time. Dynamic matching needs to be differentiated from the instantaneous matching commonly seen in ride-sharing. Second, resource matching typically involves many-to-many matching, since a single order can be fulfilled by multiple manufacturers based on factors such as capacity and distance. Similarly, a single manufacturer can serve multiple customers depending on the type of orders or demand types. Third, the matching framework typically consists of large state and action spaces. State and action space sizes grow exponentially as the number of firms and demand-supply types increases, while many-to-many matching further expands the action space. Lastly, the matching of resources is constrained by manufacturers' capacities. A manufacturer can only share the capacity that is available and a customer is not incentivized to take more than the amount demanded. While feasibility is also an important consideration in an organ allocation problem, a feasibility constraint makes solving a dynamic, many-to-many matching problem significantly more difficult.

In this paper, we explicitly consider the aforementioned characteristics pertaining to resource matching in manufacturing and formulate the problem as a sequential decision process. Specifically, we consider a two-sided matching with random demands and fixed capacities over a finite-time horizon. The matching is many-to-many constrained by the demand and supply quantities. Demands for capacities are allowed to be backlogged. Each matching is associated with a reward and the objective is to maximize the expected total rewards. A challenging modeling element here concerns the transition dynamics of the matching system of interest. It is difficult to accurately model the joint distribution of all types of demands, which inevitably calls for a model-free method. Therefore, we resolve to reinforcement learning (RL) which does not require the knowledge of a probabilistic model for system transitions. Our work represents an initial attempt to solve a complex, dynamic manufacturing resource-matching problem via RL.

Our problem is distinguished by its inclusion of high-dimensional states and actions, with the action space expanding significantly as the state space increases. This challenge is further compounded by demand uncertainty over the planning horizon. Specifically, for a matching problem with $m$ demand types and $n$ supply types, with $N_s$ being the supply limit and $N_d$ being the truncated demand limit, our state space is given by $\mathcal{S} \subseteq \mathbb{R}^{N_d^m} \times m$ with cardinality $|\mathcal{S}|={N_d^m}$, and the corresponding action space is given by $\mathcal{A} \subset \mathbb{R}^{N_s^{n^2}} \times m \times n$ with cardinality $|\mathcal{A}| \in \mathbb{R}^{N_s^{n^2}}$. This issue renders the problem practically intractable by any conventional exact methods or advanced MDP algorithms, especially in applications of our interest such as inventory control\citep{dulacarnold2016deep, Vanvuchelen2022}. The authors illustrate the gravity of the issue using a basic joint replenishment problem, wherein simultaneous replenishment decisions are made for multiple items. With only binary decisions (e.g., order/no order) available for each item, the total number of actions reaches up to a billion in scenarios involving up to 40 items.

While model-free RL eliminates the need for explicit modeling of transition dynamics, conventional tabular model-free methods like Q-learning, while simple and exhibiting excellent learning ability, prove inefficient in solving complex problems in large state-action environments since many of the state-actions might not have been experienced previously \citep{8836506}. Moreover, \citet{jin2018qlearning} have mathematically shown Q-learning to have a runtime of $\mathcal{O}(T)$ and a space of $\mathcal{O}(\mathcal{SA}H)$, where $H$ and $T$ are number of steps per episode and total number of steps respectively. Furthermore, traditional Q-learning employing a greedy target policy often encounters biased estimates early in the learning process, affecting solution quality and convergence speed. To address these issues, we introduce a penalty to the Q-learning update equation to penalize actions deviating from a given prior policy. Additionally, we introduce an infeasibility penalty to tackle encountering infeasible actions. The convergence of our modified Q-learning method with a general penalty function is theoretically guaranteed, with two variations in the update rule based on the behavior of the regularization parameter, $\beta$. We augment the traditional Deep Deterministic Policy Gradient (DDPG) framework by integrating the modified update rule into its architecture, embedding it within the critic loss function to efficiently handle large state and action spaces. This adaptation allows our proposed DKDDPG to explore the solution space adeptly and identify optimal policies efficiently within the dynamic matching problem domain, improving the convergence rate while providing reasonably accurate solutions.

Our approach differs from existing methods in two significant ways. Firstly, the inclusion of a regularizer term, informed by problem-specific prior policies, is expected to enhance Q-learning performance, addressing the issues of biased initial estimates and slow convergence. This regularizer also enables the utilization of high-quality solutions from one-period matching problem solvers, guiding the agent towards optimal policies more effectively and eliminating pathological policies. Secondly, while our modified Q-learning algorithm shows promise in small-scale scenarios, scaling to large-scale matching scenarios necessitates adapting the modified update rule to appropriate deep neural networks. Training traditional deep RL methods such as DQN proves infeasible for many industrial applications that contain large action spaces as shown by \citet{Vanvuchelen2022}. Consequently, we tailor an existing deep RL method, DDPG, by integrating the modified update rule into the critic network's loss function, ensuring performance enhancement.

The main contribution of this paper is three-fold: First, we formulate the multi-period, many-to-many manufacturing resource matching problem as a sequential decision process. This is among the first efforts in providing a multi-period decision framework for resource matching in manufacturing. Second, we prove the convergence of domain knowledge-informed Q-learning algorithm, providing a performance guarantee for small-size problems that can be solved by Q-learning and theoretical guidance for designing efficient algorithms for large-size problems. Lastly, we develop our DKDDPG algorithm which utilizes the domain knowledge-informed Q-learning algorithm, and conduct a computational study to show that it obtains accurate solutions efficiently.

The remainder of the paper is organized as follows. In Section 2, we review relevant literature in the context of matching problems and discuss methods in the RL literature. In Section 3, we design the dynamic resource matching in manufacturing as a sequential decision-making problem using the conventional Markov decision process (MDP) framework. In Section 4, we introduce Q-learning with a penalty and provide theoretical results and proof for its convergence. Section 5 provides details about DDPG and its enhanced version DKDDPG and discusses its implementation for our matching problem. In Section 6, we present computational results comparing the performance of DKDDPG with contemporary RL algorithms in the context of our problem.

\section{Literature Review}
In this section, we review two relevant literature streams: matching and reinforcement learning.

\subsection{Matching Problem}
 
Matching problems can be categorized into three classes: one-to-one, one-to-many, and many-to-many. One-to-one has mostly been studied in the context of ride-sharing \citep{tafreshian2020rideshare}, stable marriage \citep{Knuth1996StableMA}, kidney allocation \citep{roth2004kidney},  car-pooling (e.g., Uber) and home-sharing (e.g., Airbnb). The one-to-one problem is typically modeled as a bipartite matching problem where the bipartite graph $G=(\mathcal{R},\mathcal{D}, E)$ consists of two disjoint sets of nodes: one corresponding to the set of riders ($\mathcal{R}$) and one to the set of drivers ($\mathcal{D}$). Many polynomial-time exact or heuristic methods have been developed for the bipartite matching problem \citep{Riesen2007SpeedingUG}. \citet{tafreshian2020rideshare} review a large variety of exact and approximation algorithms for this problem with polynomial worst-case running time bounds such as greedy algorithm, iterative algorithm based on augmenting path and a scaling algorithm to name a few. \citet{karp1990onlinebm} propose the Ranking algorithm, a simple randomized online algorithm that achieves a competitive ratio. \citet{antoniadis2020secretary} propose using machine learning predictions to solve the online bipartite matching and improve its performance guarantee.

Many-to-one matching problems are those where there are multiple matches possible for a single node. They are commonly seen in college admissions problem designed by \citet{gale1962college}, which is a generalization of the marriage problem. \citet{Baiou2000admissions} characterize the stable admissions polytope using a system of linear inequalities. \citet{sethuraman2006m21} associate a geometric structure to the system of inequalities and provide a simple visual proof of the integrality of the Baïou-Balinski formulation. The many-to-many matching model introduced by \citet{roth1984stability, roth1985job2} is another generalization of the marriage model where multiple nodes on both sides of a bipartite graph can be matched. The matching of demand and supply types is a form of many-to-many matching. Many single-period demand and supply matching problems are modeled as a single-commodity network flow problem (SCNF) and a number of heuristic algorithms have been developed to solve the formulated SCNF problem. For example, to solve the single-commodity, uncapacitated, fixed-charge network flow problem, \citet{ortega2003bnc} develop a branch-and-cut algorithm that determines the right cut set for generating inequalities.

Existing literature on matching problems, while abundant, is mostly defined in a single-period deterministic problem setting. Only a few works have considered dynamic matching in a finite horizon setting. \citet{kurino2014house} and \citet{bloch2009housing2} devise a dynamic allocation problem of on-campus student housing. In their models, agents have deterministic arrivals and departures. \citet{unver2010dynamic} studies dynamic kidney exchange with inter-temporal random arrivals of patient-donor pairs with the objective of maximizing the number of matched compatible pairs. They consider a general dynamic problem from the point of view of a central authority (e.g., a hospital) whose objective is to minimize the long-run total discounted waiting cost. They separately derive efficient dynamic matching mechanisms that conduct two-way and multi-way exchanges, involving the matching of two infeasible donor-patient pairs and multiple infeasible donor-patient pairs respectively. In the context of demand-supply matching, \citet{hu2022dynamic} model dynamic matching of demand-supply types as a finite-horizon problem with a stochastic dynamic program where the optimal expected total discounted surplus is maximized for each demand-supply type. They formulate the problem in an MDP framework, explore the structural properties of the optimal policy, and propose heuristic policies to solve the dynamic matching problem.

Few of the aforementioned works have holistically considered the key characteristics of matching in manufacturing (e.g., many-to-many, dynamic matching, feasibility constraint), and are therefore not applicable to manufacturing resource matching.  On the other hand, limited works that consider matching in manufacturing often make simplified assumptions and are mostly limited to bipartite frameworks and single-period optimization problems. \citet{yang2021matchingam} consider a resource allocation problem between customers and additive manufacturing (AM) manufacturers. They assume that each order demands a single-unit resource and formulate the allocation problem as an integer program. They consider one-to-one matching and the stable matching algorithm is leveraged to optimize matches between customers and AM providers. \citet{pahwa2020maas} propose approximate stable matching solutions using mechanism design and mathematical programming approaches to solve a bipartite matching problem in manufacturing in a dynamic environment. The matching considered in this system is many-to-one between the orders and suppliers. Analytical models and efficient algorithms are a critical necessity for solving large-scale dynamic resource matching in manufacturing.

\subsection{Reinforcement learning}
RL \citep{sutton2018reinforcement} has proven efficient to solve complex sequential decision problems (e.g., telecommunications, robot control, and game playing among others \citep{li2017deep}). RL is concerned with optimizing an agent's choice of action to maximize a cumulative reward. Unlike model-based methods such as dynamic programming, which require an exact transition probability model that is often hard to obtain in practice, model-free RL methods (e.g., Temporal difference (TD(0)) learning methods, Q-learning) aim to learn the optimal policy either online or offline without knowing the underlying transition dynamics.

Despite the fact that Q-learning is guaranteed to find the optimal solution, the runtime increases exponentially as the state and action spaces grow since Q-learning is a tabular method. \citet{mnih2015human} develop an approach to train a deep Q-network (DQN), an action-value function approximated by a convolutional neural network on the high-dimensional visual inputs of a variety of Atari games. As part of their enhancements to the original Q-learning algorithm, two key enhancements are made: experience replay buffer and target network freezing. The former is designed to reduce the instability associated with training on highly correlated sequential data. By using a target network whose weights are periodically derived from the main network, the latter addresses the instability caused by chasing a moving target. Since then, DQN has gained wide popularity for solving problems with high-dimensional sensory inputs and actions and excels at a diverse array of challenging tasks. \citet{al2019deeppool} develop a DQN model to learn optimal dispatch policies for ride-sharing by interacting with the environment. \citet{gao2020application} implement DQN for portfolio management in the stock market and observe that DQN outperforms ten other traditional strategies.

While DQN works well for high-dimensional states with a small action space, it suffers from run-time and memory issues for problems involving very large action space since it increases the size of the output layer. Alternatively, obtaining a single action array as the network output ensures higher efficiency in finding the optimal action. \citet{lillicrap2015continuous} adapt the idea underlying the success of DQN to complex, high-dimensional action spaces and propose an actor-critic model-free algorithm based on the deterministic policy gradient that can operate over continuous action spaces. This algorithm named deep deterministic policy gradient (DDPG) uses the same learning algorithm and network architecture as DQN. It is able to robustly solve many simulated physics tasks involving large state and action spaces.

\subsubsection{RL for Combinatorial optimization}

Numerous advancements have occurred in solution techniques for combinatorial problems sharing similarities. Machine learning (ML) and reinforcement learning (RL) have been widely employed in addressing a variety of common combinatorial problems, including but not limited to the traveling salesman problem (TSP), vehicle routing problem, maximum cut, and minimum vertex cover problem. \citet{bengio2020machine} survey methods in ML and operations research to solve combinatorial problems in general. They review advancements such as imitation learning and experience learning using graph neural networks in the context of combinatorial optimization (CO). \citet{wang2021deep} discuss various recent deep RL methods implemented for common CO problems like capacitated vehicle routing problem (CVRP) and summarize the different deep RL methods categorically based on value and policy, such as DQN, DDPG, and other actor-critic methods. They also compare numerous neural network architectures such as pointer networks, transformers, and LSTMs implemented in conjunction with listed RL algorithms to solve common CO problems.
    
    \citet{delarue2020reinforcement} devise an RL framework tailored for value-based deep RL methods featuring a combinatorial action space. They formulate a CVRP as a sequential decision problem and frame the selection of actions as a mixed-integer optimization problem. \citet{dai2018learning} propose a combination of RL algorithms with graph embedding to learn greedy heuristics for solving widely known CO problems such as the Travelling Salesman Problem (TSP), Maximum cut, and Minimum Vertex cover. \citet{barrett2020exploratory} introduce the ECO-DQN approach, designed to iteratively enhance solutions for the Maximum Cut problem by learning exploration strategies during testing. Their method's adaptability is highlighted by its capability to initiate from any state, demonstrating its flexibility in integration with various search heuristics. \citet{bello2017neural} develop a neural combinatorial optimization algorithm that combines actor-critic methods with recurrent neural networks to solve TSP. They further discuss the possibilities of extending to other problems and emphasize the designing of feasibility for the problems. A similar solution approach was designed by \citet{nazari2018reinforcement}, where they replaced a pointer network and added an attention mechanism to their RNN to expand to VRP as well. \citet{kool2019attention} also implement an attention mechanism along with the REINFORCE methods to learn strong heuristics and solve a wide range of practical problems.
    
    While the aforementioned works provide a thorough literature review of reinforcement learning (RL) applied to combinatorial optimization, our problem stands out due to its distinctive characteristics. With numerous suppliers and customers involved, our problem encompasses multi-dimensional states and actions, with both the action and state spaces expanding exponentially as the number of suppliers and customers grows. This challenge is further compounded by demand uncertainty over the planning horizon. While some cited works tackle infeasibilities, our method adopts a unique approach by proportionally penalizing such solutions and integrating an additional general convex penalty function into the Q-learning update rule, utilizing problem-specific knowledge based on the work by \citet{fox2015taming}. They propose a penalty-based Q-learning algorithm to penalize biased estimates due to the minimum (or maximum) operator in Q-learning. The authors use a prior policy-based penalty function to penalize deterministic policies at the beginning of learning. They show that their method reduces the bias of the value-function estimation, leading to faster convergence to the optimal value and the optimal policy. In this context, we introduce and theoretically validate a regularized Q-learning update rule based on \citet{bertsekas1996neuro} and \citet{singh2000sarsa}. Furthermore, we extend this methodology to address large-scale problems using deep neural networks, demonstrating near-optimal solutions and enhanced overall performance.\\

\section{Problem Setting}
We consider a multi-period manufacturing resource matching problem. Let \(\mathcal{T}={1,2,\ldots,T}\) denote the decision epochs. Let \(\mathcal{D}={1,2,\ldots, m} \) represent the set of demand types for manufacturing resources and \(\mathcal{S}={1,2,\ldots, n}\) represent the set of capacity supply type.  The sets $\mathcal{D}$ and $\mathcal{S}$ are disjoint. The arc $(i,j)$ represents the matching between demand type $i$ ($i \in \mathcal{D}$) and supply type $j$ ($j \in \mathcal{S}$). We denote the complete set of arcs by $\mathcal{A} = \{(i,j)|1\leq i \leq m,1\leq j \leq n\}$. As an example, different types of manufacturing processes such as 3D printing, extrusion, CNC machining, etc. can be considered as types of demand or capacity. Note that capacity and supply are used interchangeably in this paper.

The system state is denoted by the outstanding demand vector $\textbf{x} =(x_{1},\dotso,x_{m})\in \mathbb{R}^{m}_+$. The capacity vector is denoted by $\textbf{c} =(c_{1},\dotso,c_{n})\in \mathbb{R}^{n}_+$, where $x_{i}$ and $c_{j}$ are the quantity of type $i$ demand and type $j$ capacity available to be matched respectively in time $t$. We assume that demands arrive randomly at each decision epoch. We also assume that the capacities are fixed in the planning horizon because the capacity of manufacturing resources typically does not change for some time. At the beginning of period $t$, our state comprises of outstanding (backlogged) demand units of various types, $\textbf{x}$. We assume that each demand will take the capacity for one period. We further assume an exogenous correlation between the distribution of demand between one period and the next, that is, they are determined outside the model and imposed on the model. Our model does not account for endogenous correlations between a matching decision in a period and the demand distribution in a future period.

We consider two widely used reward structures - the horizontal reward model \citep{ashlagi2016optimal} and the vertical reward model \citep{sutton1986vermodel}.  Horizontal matching is preference-based and the reward is dependent on the matching of the demand-supply type pair. The unit matching reward, $r_{ij}$ is based on the distance between the demand type $i$  and capacity type $j$, that is, $\delta_{ij}$. We consider a linear structure reward given by $r_{ij} = R - \delta_{ij}$, where $R$ is the fixed prize for matching. For example, suppose there are two demand types $(i=1,2)$ and two capacity types $(j=1,2)$. Let \(i=j=1\) represent the extrusion process and \(i=j=2\) denote the 3D printing process. Demand type $1$ will form a higher rewarding pair with capacity type $1$ since they have the same preference, and a lower rewarding pair with capacity type $2$. We can extend the same understanding to applications with more than two demand-supply types. Consider manufacturing demand and supply types listed in terms of material (product) upgrading as shown in Figure~\ref{img:product_upgrade}. In this case, matches can be made based on the distance between the demand-supply types, that is, the shorter the distance and thus cost, the higher the reward for matching as we explained previously. On the other hand, vertical matching is quality-based and the reward is dependent on the quality of each demand and capacity type. Vertical matching in manufacturing could be based on aspects such as shop certifications, finishing process, experience, and inspection capabilities which involve differences in terms of quality. For example, the demand-supply types can be lined up from the best demand and supply types to the worst, in terms of quality and have a reward value associated with each type. Vertical matching a pair would generate a cumulative reward based on the reward of the demand type and the supply type as described above.  For simplicity, we consider a linearly additive function. Let  $r_{ij} = f(a_{i},b_{j}) = f_{d}(a_{i}) + f_{s}(b_{j})$, where $a_{i}$ represents the quality of demand type $i$ and $b_{j}$ represents the quality of supply type $j$, such that $f_{d}$ and $f_{s}$ are increasing in $a_{i}$ and $b_{j}$ respectively. 

\begin{figure}[htp]
    \centering
    \includegraphics[width=10cm]{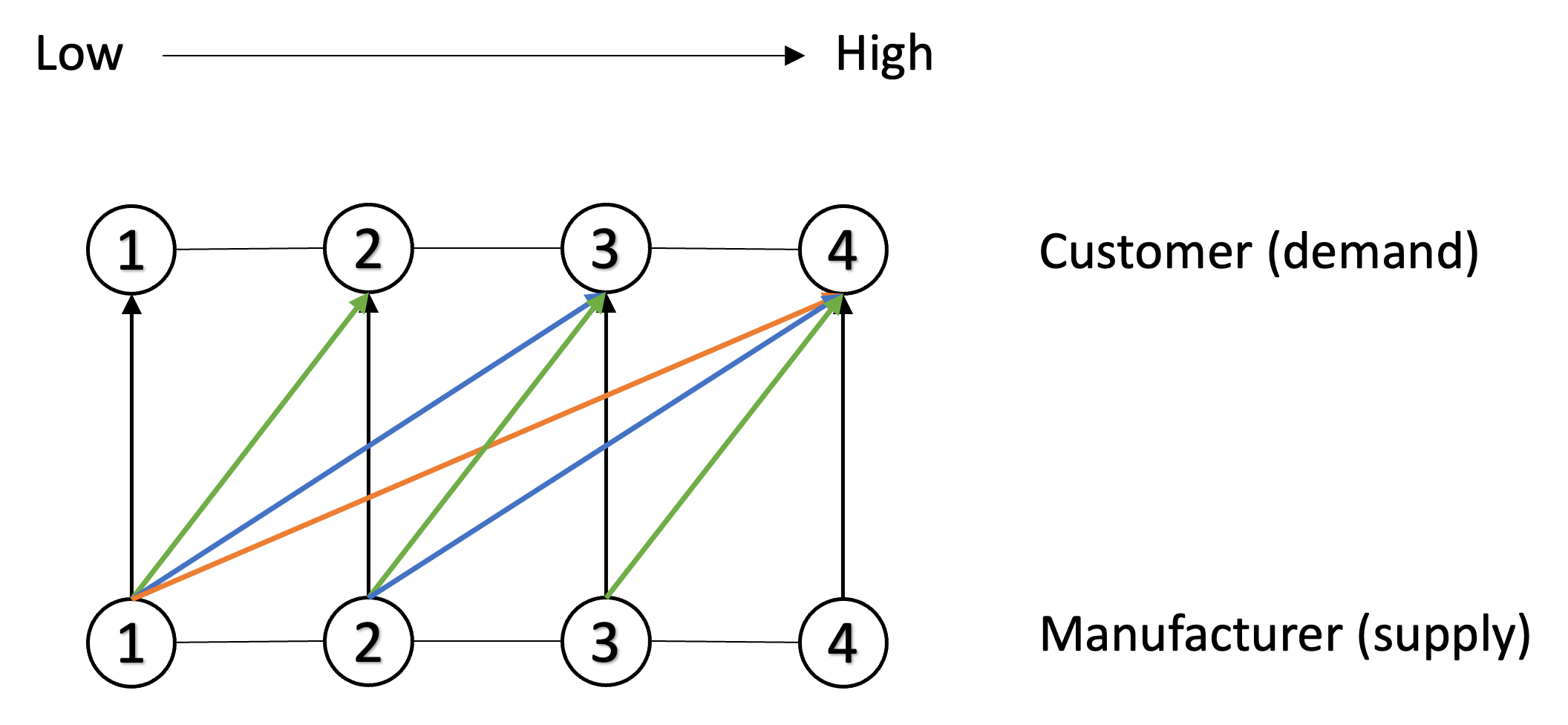}
    \caption{Material (product) upgrade application of Horizontal differentiation}
    \label{img:product_upgrade}
\end{figure}

At each decision period, a decision maker needs to determine the matching quantities for each supply-demand type, $\textbf{Q}= (q_{ij}) \in \mathbb{R}^{m\text{x}n}_+$, where $q_{ij}$ denotes the quantity of the $i^{th}$ demand matched with the $j^{th}$ supply type. The reward associated with each arc $(i,j)$ is modeled based on the horizontal or vertical heterogeneity model. The reward values can be written in a matrix form as $\textbf{R} = (r_{ij}) \in \mathbb{R}^{m\text{x}n}$. The total matching reward is given by $\textbf{R}\circ\textbf{Q} = \sum_{i=1}^m\sum_{j=1}^n r_{ij}q_{ij}$, where ``$\circ$" is the sum of elements of the entry-wise product of two matrices.

\subsection{MDP framework}
In this section, we first formulate our problem as a standard MDP. Let $p_i$ be the probability distribution of demand type \(i\). The total demand of type \(i\) in the next period is the sum of the outstanding demand at the current period \(x_i\) and the demand that occurred in the current period \(d_{i}\). Subtracting the amount of type \(i\) demand filled by matching $\bar{q}_{i} = \sum_{j}q_{ij}$, we can obtain the outstanding demand in the next period denoted by $x'_i$ given by,
\begin{align}
        x'_{i} &= x_{i}+d_{i}-\bar{q}_i.
\end{align}

\begin{figure}[htp]
    \centering
    \includegraphics[width=15cm]{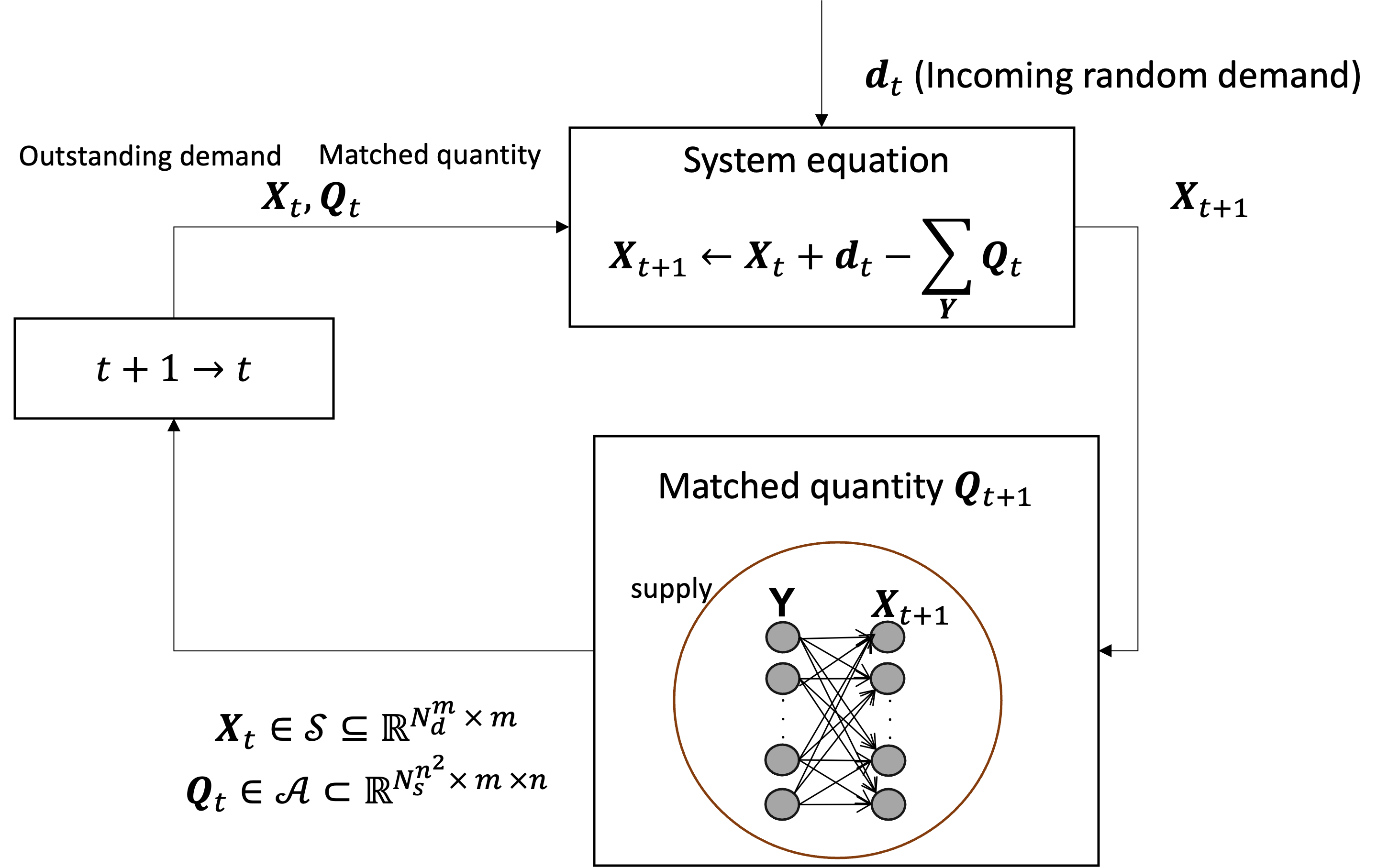}
    \caption{Dynamic matching problem illustration}
    \label{fig:2}
\end{figure}

We have illustrated our dynamic resource matching problem in MDP framework in Figure \ref{fig:2}. Moreover, the matching quantity is constrained by the current state, that is, the demand vector, as well as the capacity vector,
\begin{align}
    \sum_{j=1}^{n}q_{ij} \leq x_{i}, \quad \forall i =1,\dotso, m\\
    \sum_{i=1}^{m}q_{ij} \leq c_{j}, \quad \forall j =1,\dotso, n
\end{align}
The demand type transition probability $p_t$ is given as follows:
\begin{align}
    p_t(x'_{i}|x_{i},\Bar{q}_i) = 
    \begin{cases}
        0         &\text{if } x'_{i} < (x_{i}-\bar{q}_i)\\
        p_{x'_{i}-(x_{i}-\bar{q}_i)} &\text{if }x'_{i}\geq (x_{i}-\bar{q}_i)\\
    \end{cases}
\end{align}

Our goal is to determine a matching policy $\textbf{Q}^* = (q^{*}_{ij})$ that maximizes the expected total discounted reward. Let $\textbf{x}$ be the current state vector in period $t$ and $V_t(\textbf{x})$ be the optimal expected total discounted reward. The Bellman equation that can be used to calculate the total expected discounted reward recursively is given below:
            \begin{align}
                V_{t}(\textbf{x}) = \max_{\textbf{Q}}[\textbf{R}\circ \textbf{Q} + \gamma\sum_{\textbf{x}'}p_{t}(\textbf{x}'|\textbf{x},\textbf{Q})V_{t+1}(\textbf{x}')],
                \label{eq: mdp}
            \end{align}
where $\gamma$ is the discount factor. The matching quantities cannot exceed the demand or supply levels of the different types. At the end of the horizon, all unmatched demand and supply leave the system and therefore, the boundary condition is $V_{T+1}(x)=0$ for all $x \in \mathbb{R}_{+}^{m}$.

The literature on solving this type of matching for a single-period problem is abundant. The above matching problem, however, cannot be solved as a single-period problem which we illustrate with a simple manufacturing example. Consider a 3-period raw-materials matching problem with two demand types and two supply types. Let type 1 be carbon steel and type 2 be stainless steel, that is, a customer who needs carbon steel is demand type 1 and a supplier for the same is supply type 1. It applies similarly to the second demand-supply type. The outstanding demand is $\textbf{x}=[x_1,x_2]=[8, 7]$, and the available capacity is $\textbf{y}=[y_1,y_2] = [6, 5]$. The possible demand quantities for both types are $(0,1,2,3,4,5,6,7,8)$ with probabilities $(0.2,0.2,0.2,0.2,0.2,0,0,0,0)$ for type 1 and $(0.2,0,0.2,0.2,0.2,0,0,0,0.2)$ for type 2 respectively. The discount factor,$\gamma$ is $0.9$ and the reward matrix $\textbf{R}$ is chosen such that $r_{11}>r_{22}>r_{12}>r_{21}$ and is given by \(\textbf{R} = [10 \quad7\quad;\quad5\quad8]\).

We used CPLEX to solve for the single-period solution and used value iteration as the dynamic programming solution. The matching matrices for both solutions are specified below:
\begin{align*}
     \text{Single-period problem:  } &\textbf{Q}_{\textbf{CPLEX}} = [6\quad0\quad ;\quad0\quad5]\\
     \text{Three-period problem:  }&\textbf{Q}_{\textbf{MDP},t_{1}} =[4\quad0\quad; \quad2\quad5], \quad \textbf{Q}_{\textbf{MDP},t_{2}} =[4 \quad0 \quad; \quad2 \quad5],\\ 
    &\textbf{Q}_{\textbf{MDP},t_{3}} =[6 \quad0 \quad; \quad0 \quad5]
\end{align*}
Thus, from the above two solutions, we see that the dynamic programming approach accounts for the incoming demand to obtain an optimal solution as compared to the single-period solution which greedily matches the quantities in every period. Moreover, for an MDP to have a myopic optimum, it requires each transition probability to depend on the action taken but not on the state from which the transition occurs \citep{sobel1981myopic}. Thus, optimal matching policies cannot be obtained by solving a single-period problem.

\subsection{Domain knowledge-informed Q-learning}
The standard MDP problem formulated in Equation~\ref{eq: mdp} requires explicit modeling of the transition dynamics, which can be hard to obtain in practice. Moreover, even if the dynamics are available, standard MDPs suffer from the curse of dimensionality. Conventional value iteration or policy iteration algorithms can thus, solve only small-size problems due to computing limitations. On the other hand, model-free RL algorithms use experiences (e.g., samples) to obtain the value estimate of a state rather than building a model of the environment. The RL algorithms can further leverage powerful function approximation methods to compactly represent value functions, which enables it to deal with large, high-dimensional state and action spaces. In this section, we enhance the existing Q-learning algorithm based on the characteristics of the problem of our interest. Specifically, we first introduce a function that penalizes deviations of the learned policy from some prior policy. We also remove the constraints by penalizing the violation of constraints in the Q value function. The resulting value-penalty function is then implemented in the update step of Q-learning.

 The action-value function (Q-function) is a critical modeling element in any RL algorithm. The Q-value for any state $s \in S$ and $a=\pi(s)$ is given by,
\begin{align}
    Q^\pi(s,a) &= \mathrm{E}[r_t + \gamma r_{t+1} + \gamma^2 r_{t+2} + ...|s_t=s,a_t=a;\pi]\\
           &= r + \gamma\mathrm{E}[V^\pi(s')|s,a], \label{eq:Q-function}
\end{align}
where $s_t, a_t$, and $r_t$ are the state, action, and the reward obtained in time step $t$ respectively, $\pi$ is the policy and $\gamma$ is the discount factor.

Q-learning is a model-free, off-policy RL algorithm, where the Q-value is randomly chosen initially for each state-action pair and is then updated iteratively using the following rule
\begin{align}\label{eq:Q-learning}
    Q(s_t,a_t) \longleftarrow (1-\alpha_t)Q(s_t,a_t) + \alpha_t(r_{t+1} + \gamma\max_{a}Q(s_{t+1},a)), \qquad \forall t=1,2,....,
 \end{align}
where $\alpha_t$ is the learning rate (or step-size).

As observed in Equation~\ref{eq:Q-learning}, the Q-learning algorithm updates a state-action pair by maximizing over all those actions possible in the next state. This can lead to the agent learning biased estimates in the initial stages of the algorithm and thus, much of the initial time would be spent on unlearning the biased estimates. To tackle this issue, inspired by the work by \citet{fox2015taming}, we incorporate a penalty on the learned policy $\pi(a|s)$ for $a\in \mathcal{A}$ and $s\in \mathcal{S}$ with respect to some prior policy $\mu(a|s)$. The penalty function is defined as,
\begin{align}\label{eq:penaltyfn}
    g^\pi(s,a) = h(\pi(a|s),\mu(a|s))
\end{align}
where $h(\cdot,\cdot)$ is any convex function. 

We consider a general convex function as compared to the above-referenced work where the authors specifically use KL-divergence for the penalty function. Other common examples of divergence functions include all types of norms.
The penalty in Equation~\ref{eq:penaltyfn} regularizes the learned policy by penalizing deviations from the prior policy, which is added to the value estimate of the next state. Recall, the value function $V^\pi(s)$ for a given state $s$ with policy $\pi$ is given by,
\begin{align}
    V^\pi(s) = \sum_{t\geq0}\gamma^t\mathrm{E}[r_t|s_0=s]
\end{align}
Adding the penalty function to the discounted reward function in the above equation we get,
\begin{align}
    F^\pi(s) = \sum_{t\geq0}\gamma^t\mathrm{E}\bigg[\frac{1}{\beta}g^\pi(s_t,a_t) + r_t|s_0=s\bigg]
\end{align}
We call the above function $F^\pi$ as the value-penalty function. Here, $\beta$ is the regularization parameter that sets the weight of the penalty. It controls the effect of the regularization in the update step.

The $V^\pi(s)$ in Equation~\ref{eq:Q-function} can be replaced with the value-penalty function $F^\pi(s)$ defined above to give a new Q-function analogous to Equation~\ref{eq:Q-function},
\begin{align}
    Q_{DK}^\pi(s,a) = r + \gamma\mathrm{E}[F^\pi(s')|s,a],
\end{align}
which we call the domain knowledge-informed Q-function. From the above two definitions we obtain
\begin{align}
    F^\pi(s) &= \sum_a \pi(a|s) \bigg[\frac{1}{\beta} g^\pi(s,a) + Q_{DK}^\pi(s,a) \bigg]
\intertext{The domain knowledge-informed Q-learning update equation is then given by,}
    Q_{DK}(s_t,a_t) &\longleftarrow (1-\alpha_t)Q_{DK}(s_t,a_t) + \alpha_t(r_t + \gamma \max_\pi F^\pi(s_{t+1})). \label{eq:newQ}
\end{align}

\subsubsection{Prior policy $\mu$}
Choosing a prior policy is critical since it can either improve convergence or hinder it accordingly. Hence, a prior policy that represents the prior knowledge of the problem has to be carefully chosen.  In our problem, we consider a single-period optimal policy as the prior policy, which is obtained by using a standard solver such as CPLEX. The prior policy helps eliminate pathological policies (e.g., actions that match no quantities) that slow down the convergence.

\subsubsection{Regularization parameter $\beta$}
The regularization parameter, $\beta$ controls divergence from the prior policy. It can either be fixed with respect to time i.e., $\beta_t = \beta$, or scheduled with time, where $\beta_t$ is a function of time. The penalty function $g^\pi(s,a)$ is modeled in a way that when the value of $\beta$ increases to a large number, the optimal domain knowledge-informed Q-learning estimate $Q_{DK}^*$ and value-penalty estimate $F^*$ reduce to standard Q-learning estimate $Q^*$ and value function estimate $V^*$. Also, when $\beta$ is infinitesimal, the effect of the penalty on the value estimate follows the prior policy $\mu$. Thus, in the initial stages of learning, the prior policy gives an advantage over the greedy Q-value estimate, and in the later stages of learning, the greedy Q-value estimate is a more precise estimate of optimal Q-value, $Q^*$. Therefore, scheduling $\beta$ with respect to time would ensure smooth transitioning from $Q^\mu$ to $Q^\pi$, thereby balancing the benefits of both phases of learning.

\subsection{Infeasibility penalty}

An infeasible action is one that violates the demand and capacity constraints established in the earlier section 3.1. This means that the total quantity matched exceeds the respective demand and capacity limits. Hence, it is critical to penalize these actions in order to produce feasible matched quantities. We introduce an infeasibility penalty for violation of each constraint. The penalty associated with violation of the state (outstanding demand) is proportional to the sum of the exceeded matched quantity over all the demand types given by,
\begin{align}
    u(\textbf{x},\textbf{Q}) &= k_1 \bigg(\sum_i^m \mathbbm{1}_{\bar{q}_i > x_i}(\bar{q}_i - x_i)\bigg)
    \intertext{where $u(\cdot,\cdot)$ is the demand penalty function whose arguments are demand $\textbf{x}$ and action (matching quantity) $\textbf{Q}$, $\bar{q}_{i} = \sum_{j=1}^{n}q_{(i,j)}$, and $k_1$ is the demand penalty proportional constant. Similarly, the penalty associated with violation of the capacity is given by,}
    v(\textbf{Q}) &= k_2 \bigg(\sum_j^n\mathbbm{1}_{\bar{q}_j > c_j}(\bar{q}_j - c_j)\bigg)
\end{align}
where $v(\cdot)$ is the supply penalty function whose argument is action (matching quantity) $\textbf{Q}$, $\bar{q}_{j} = \sum_{i=1}^{m}q_{(i,j)}$, and $k_2$ is the supply penalty proportional constant. We obtain a revised reward formulation by subtracting the penalty functions from the matching reward $r_t$ defined in section 3.1,
\begin{align}
    r_t = \textbf{R}\circ\textbf{Q} - u(\textbf{x},\textbf{Q}) - v(\textbf{Q}).
\end{align}

In the following section, we give theoretical results to prove the convergence of domain knowledge-informed Q-learning algorithm for two cases: fixed $\beta$ and changing (increasing) $\beta_t$. Note, that the matching reward $r_t$ in time $t$ encompasses the net reward after including the infeasibility penalties.

\section{Convergence of domain knowledge-informed Q-learning}
By introducing a penalty based on a prior policy, we nudge the algorithm away from learning biased estimates, thereby potentially leading to faster convergence than Q-learning. We formalize this intuition here and prove that the optimal Q-value obtained from the domain knowledge-informed Q-learning algorithm converges to the optimal Q value. We develop our proofs based on theoretical results provided by \citet{bertsekas1996neuro} and \citet{singh2000sarsa}. Note, in both the following sections, we denote domain knowledge-informed Q-function as $Q$ instead of $Q_{DK}$ for ease of notation.

\subsection{Value-penalty function with fixed $\beta$}

In this section, we consider a general algorithm based on pseudo-contraction to help prove the convergence of our domain knowledge-informed Q-learning algorithm with fixed weight parameter $\beta$. 

We first introduce several expressions that are heavily used in the analysis of the convergence behavior of the standard Q-learning algorithm. Starting with some arbitrary estimate $f_0 \in \mathrm{R}^n$, we assume that the $i$th component $f(i)$ of $f$ is updated according to
\begin{align}\label{eq:generaliter}
    f_{t+1}(i) = \big(1-\gamma_t(i)\big)f_t(i) + \gamma_t(i)\big((Hf_t)(i) + \omega_t(i)\big), \qquad t=0,1,\dotso,
\end{align}
where states are denoted by $i=1,2,...,n$, and $\omega_t(i)$ is a random noise term. We denote by $\mathcal{P}_t$ the history of the algorithm until time $t$, which can be defined as,
$\mathcal{P}_t=\{f_{0}(i),...,f_{t}(i),w_{0}(i),...,w_{t}(i),\\\gamma_{0}(i),...,\gamma_{t}(i)\}$, for $i=\{1,2,...,n\}$, or may include some additional information. We now introduce some assumptions to help prove the following theorem.

\begin{restatable}{assumption}{asmpone}(Assumption 4.3, Prop. 4.4, Bertsekas et al.)
\label{asm:ASMONE}
\begin{enumerate}[(a)]
    \item The step-sizes $\gamma_t(i)$ are nonnegative and satisfy
    \begin{align*}
        \sum_{t=0}^\infty\gamma_{t}(i)=\infty, \qquad \sum_{t=0}^\infty\gamma_{t}^2(i)<\infty
    \end{align*}
    \item For every $i$ and $t$, we have $\mathrm{E}[\omega_t(i)|\mathcal{F}_t]=0$.
    \item Given any norm $||\cdot||$ on $\mathrm{R}^n$, there exist constants $A$ and $B$ such that
    \begin{align*}
        \mathrm{E}[\omega_t^2(i)|\mathcal{F}_t] \leq A + B||f_t||^2, \qquad \forall i,t.
    \end{align*}
    \item The mapping $H$ is a weighted maximum norm pseudo-contraction. 
\end{enumerate}
\end{restatable}

Notice that, since $0 \leq \gamma_t(i) < 1$, Assumption~\ref{asm:ASMONE}(a) requires that all state-action pairs be visited infinitely often. Parts (b) and (c) provide assumptions on the noise term. Assumption~\ref{asm:ASMONE}(b) states that $\omega_t(i)$ has zero conditional mean and part (c) provides an upper bound on the conditional variance of the noise term. Part (d) implies that if there exists some $r^* \in \mathbb{R}^n$, a positive vector $\xi = (\xi(1),\dotso,\xi(n)) \in \mathbb{R}^n,$ and a constant $L \in [0,1),$ then the function $H:\mathbb{R}^n \mapsto \mathbb{R}^n$ satisfies,
\begin{align*}
    ||Hr-r^*||_\xi \leq L||r-r^*||_\xi, \qquad \forall r.
\end{align*}

Based on the above assumptions, the proof of convergence of the sequence generated by iteration ~\ref{eq:generaliter} has been given by \citet{bertsekas1996neuro}. We state the convergence result for the Q-function with penalties on deviations from a prior policy and infeasibilities (Equation~\ref{eq:newQ}) under a fixed \(\beta\).
\begin{restatable}{theorem}{thmone}
\label{thm:THMONE}
Let $Q_t$ be the sequence generated by the iteration~\ref{eq:newQ}. We assume that  the step-sizes $\alpha_t$ are non-negative and satisfy
    \begin{align*}
        \sum_{t=0}^\infty\alpha_{t}=\infty, \qquad \sum_{t=0}^\infty\alpha_{t}^2<\infty
    \end{align*}
    Then $Q_t$ converges to $Q^*$ with probability 1.
\end{restatable}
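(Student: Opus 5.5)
We cast the update in Equation~\ref{eq:newQ} as an instance of the general stochastic iteration~\ref{eq:generaliter} and then verify Assumption~\ref{asm:ASMONE}, so that the convergence result of \citet{bertsekas1996neuro} applies. The components are indexed by state--action pairs $(s,a)$, and $f_t = Q_t$ with $\gamma_t(s,a)=\alpha_t$ when $(s,a)=(s_t,a_t)$ and $0$ otherwise. We define the operator
\begin{align*}
(HQ)(s,a) = r(s,a) + \gamma\,\mathrm{E}\Big[\max_\pi \sum_{a'}\pi(a'|s')\Big(\tfrac{1}{\beta}g^\pi(s',a') + Q(s',a')\Big)\,\Big|\, s,a\Big],
\end{align*}
where $r$ is the penalized reward of Section~3.3. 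The noise is
\begin{align*}
\omega_t(s_t,a_t) = r_t + \gamma\max_\pi F_{Q_t}^\pi(s_{t+1}) - (HQ_t)(s_t,a_t),
\end{align*}
and $Q^*$ is the fixed point of $H$. Assumption~\ref{asm:ASMONE}(a) is then exactly the step-size hypothesis of the theorem, together with the standing requirement that every pair is visited infinitely often.

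Next we check (b) and (c). Part (b) holds by construction, because $\omega_t$ is the difference between a sample and its conditional expectation given $\mathcal{P}_t$. For (c) we use finiteness of the state and action spaces, which holds for the small-size truncated problem. The reward $\textbf{R}\circ\textbf{Q}-u-v$ is bounded, since $u,v$ are linear in bounded quantities. The penalty term $\frac1\beta g^\pi$ is bounded over the simplex, since $h$ is convex and we assume it is finite on the compact domain; for KL this holds when $\mu$ has full support. Hence $|\max_\pi F^\pi_{Q_t}(s')| \le C + \|Q_t\|_\infty$, and therefore $\mathrm{E}[\omega_t^2\mid\mathcal{P}_t] \le A + B\|Q_t\|_\infty^2$.

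The main step is (d): showing that $H$ is a contraction, hence a pseudo-contraction, in the unweighted maximum norm with modulus $\gamma$. For fixed $s'$, write
\begin{align*}
\Phi_Q(s') = \max_\pi \sum_{a'} \pi(a'|s')\big(\tfrac1\beta g^\pi(s',a') + Q(s',a')\big).
\end{align*}
For each fixed $\pi$, the bracketed objective depends on $Q$ only through the linear term $\sum_{a'}\pi(a'|s')Q(s',a')$, and $g^\pi$ does not depend on $Q$. Given two estimates $Q_1,Q_2$, we therefore get
\begin{align*}
\Big|\sum_{a'}\pi(a'|s')\big(Q_1(s',a')-Q_2(s',a')\big)\Big| \le \|Q_1-Q_2\|_\infty .
\end{align*}
Since $|\max_\pi A_\pi - \max_\pi B_\pi| \le \sup_\pi |A_\pi-B_\pi|$, this gives $|\Phi_{Q_1}(s')-\Phi_{Q_2}(s')| \le \|Q_1-Q_2\|_\infty$. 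Taking expectations and multiplying by $\gamma$ yields $\|HQ_1-HQ_2\|_\infty \le \gamma\|Q_1-Q_2\|_\infty$.

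The obstacle I expect is making sure the maximum over $\pi$ is attained and finite, so that $H$ is well defined. If $h$ is convex and we want a maximum rather than a minimum, the maximizer might sit at a vertex of the simplex. That is harmless, because compactness of the simplex and continuity of $h$ still give attainment, so I would state continuity of $h$ on the simplex as a standing assumption. Banach's fixed point theorem then gives a unique $Q^*$ with $HQ^*=Q^*$. All of Assumption~\ref{asm:ASMONE} holds, and Proposition~4.4 of \citet{bertsekas1996neuro} yields $Q_t\to Q^*$ with probability 1.
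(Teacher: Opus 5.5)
Your proposal is correct and follows the paper's route: cast the update as an instance of iteration~\ref{eq:generaliter}, check (a)--(d) of Assumption~\ref{asm:ASMONE}, and invoke Proposition~4.4 of \citet{bertsekas1996neuro} (the paper's Lemma~\ref{lem:LM1}). Your explicit sup-norm contraction (each fixed-$\pi$ objective is affine in $Q$) and your bound $|\max_\pi F^\pi_{Q_t}(s')|\le C+\|Q_t\|_\infty$ fill in what the paper either defers to \citet{fox2015taming} or handles loosely, since in (c) its $A$ still depends on $Q_t$ and it sets $B=-1$; the one caveat, which the paper shares, is that (a) must hold per pair, $\sum_{t:(s_t,a_t)=(s,a)}\alpha_t=\infty$, and infinite visitation together with $\sum_t\alpha_t=\infty$ does not by itself guarantee this, so it should be read as a hypothesis on per-pair step sizes.
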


Theorem~\ref{thm:THMONE} shows that Q-learning with penalties on deviation from a priory policy and infeasibilities still converges, in the limit, to the optimal Q-function, under the same mild conditions as traditional Q-learning. We prove this convergence by verifying that the new Q-function in our problem setting also satisfies the conditions provided in Assumption~\ref{asm:ASMONE}. Note that verifying these conditions is not straightforward and requires rigorous analysis of the properties of the new Q-function. Detailed proof for Theorem \ref{thm:THMONE} is provided in Appendix~\ref{sec:appendixA}.

\subsection{Value-penalty function with changing $\beta_t$}
As explained in section 3.2.2, scheduling $\beta$ with respect to time ensures a smooth transition of the learning algorithm from $Q^\mu$ to $Q^\pi$. In this section, we prove the convergence of the domain knowledge-informed Q-learning for an increasing regularization parameter $\beta_t$. Note that the penalty-value function is now dependent on $\beta_t$ instead of $\beta$ and is therefore denoted by $F^\pi_t$. Moreover, the assumptions and results provided previously involve a time-independent mapping $H$ and therefore, cannot be used to prove the convergence for an increasing regularization parameter $\beta_t$. Based on this information, we provide Equation~\ref{eq:newQ} as an iteration,
\begin{align} \label{eq:iternewQ}
    Q_{t+1}(s_t,a_t) = (1-\alpha_t)Q_t(s_t,a_t) + \alpha_t(r_t + \gamma \max_\pi F^\pi_t(s_{t+1}))
\end{align}
We now provide the convergence result for the sequence generated by iteration~\ref{eq:iternewQ}. 
\begin{restatable}{theorem}{thmtwo}
\label{thm:THMTWO}
Let $Q_t$ be the sequence generated by the iteration~\ref{eq:iternewQ}. We assume that  the step-sizes $\alpha_t$ are non-negative and satisfy
    \begin{align*}
        \sum_{t=0}^\infty\alpha_{t}=\infty, \qquad \sum_{t=0}^\infty\alpha_{t}^2<\infty
    \end{align*}
    Then $Q_t$ converges to $Q^*$ with probability 1.
\end{restatable}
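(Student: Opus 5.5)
We will prove Theorem~\ref{thm:THMTWO} by treating iteration~\ref{eq:iternewQ} as a stochastic approximation whose driving operator changes with time but converges to the standard Bellman optimality operator. We then invoke the time-varying version of the pseudo-contraction result, in the form used by \citet{singh2000sarsa} (their Lemma~1). Write the update as
\begin{align*}
Q_{t+1}(s,a) = (1-\alpha_t)Q_t(s,a) + \alpha_t\big(r_t + \gamma \max_a Q_t(s_{t+1},a) + c_t(s,a)\big),
\end{align*}
where
\begin{align*}
c_t(s,a) = \gamma\big(\max_\pi F^\pi_t(s_{t+1}) - \max_a Q_t(s_{t+1},a)\big).
\end{align*}
The term $c_t$ collects the effect of the penalty $\frac{1}{\beta_t}g^\pi$ (and its propagation through $F^\pi_t$). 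With this split, the first part is exactly the standard Q-learning update with the (infeasibility-penalized, bounded) reward. It is a $\gamma$-contraction in the max norm toward $Q^*$, and its noise term $r_t + \gamma\max_a Q_t(s_{t+1},a) - (HQ_t)(s,a)$ satisfies Assumption~\ref{asm:ASMONE}(b),(c) exactly as in the proof of Theorem~\ref{thm:THMONE}. Bounded rewards follow because $\mathbf{Q}$ ranges over a finite truncated set, so $u$ and $v$ are bounded.

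The lemma of \citet{singh2000sarsa} states the following. Suppose $\Delta_{t+1} = (1-\alpha_t)\Delta_t + \alpha_t\eta_t$ with $\|\mathrm{E}[\eta_t\mid\mathcal{P}_t]\|_\infty \le \kappa\|\Delta_t\|_\infty + c_t$, where $\kappa<1$, the conditional variance of $\eta_t$ grows at most quadratically in $\|\Delta_t\|$, and $c_t\to 0$ w.p.~1. Then $\Delta_t\to 0$ w.p.~1, provided the step sizes satisfy the Robbins--Monro conditions assumed in the theorem. Applying this with $\Delta_t = Q_t - Q^*$ and $\kappa=\gamma$ reduces the theorem to two checks. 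First, the variance bound, which is identical to the fixed-$\beta$ case. Second, that $\|c_t\|_\infty \to 0$.

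For the key step, we show that
\begin{align*}
\Big|\max_\pi F^\pi_t(s) - \max_a Q_t(s,a)\Big| \le \frac{K}{\beta_t}
\end{align*}
for a constant $K$ depending only on the range of $h$. Using $F^\pi_t(s) = \sum_a \pi(a|s)\big[\frac{1}{\beta_t}g^\pi(s,a) + Q_t(s,a)\big]$, with $g^\pi = h(\pi(a|s),\mu(a|s))$ and $h$ convex on the compact set of probability pairs, hence bounded by some $G$, we get the two-sided bound $|F^\pi_t(s) - \sum_a\pi(a|s)Q_t(s,a)| \le G/\beta_t$. Maximizing over $\pi$ preserves the bound, since the maximum of $\sum_a \pi(a|s) Q_t(s,a)$ over $\pi$ is $\max_a Q_t(s,a)$. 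Hence $\|c_t\|_\infty \le \gamma G/\beta_t$. Since $\beta_t$ is increasing, we will assume $\beta_t\to\infty$ (this is the intended schedule in Section~3.2.2), which gives $c_t\to 0$ deterministically.

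The main obstacle is making this bound rigorous when $h$ is only assumed convex. If $h$ is unbounded, as KL divergence is when $\pi$ places mass where $\mu$ does not, then $G=\infty$. We would handle this by noting that only the maximizing $\pi$ matters and that $g^\pi$ enters with a sign favoring the prior. More precisely, we would bound $\max_\pi F^\pi_t$ from above by $\max_a Q_t + \sup_\pi \frac{1}{\beta_t}\sum_a\pi g^\pi$ and from below by evaluating at the greedy policy mixed with $\mu$. If needed, we would add the mild assumption that $h$ is bounded on the simplex, or that $\mu$ has full support, which makes $h$ finite and continuous on a compact set. A secondary issue is that the bound must hold uniformly along the random trajectory. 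This is immediate because $G/\beta_t$ does not depend on $Q_t$, and boundedness of $Q_t$ itself follows from the lemma's standard argument. The lemma then yields $Q_t\to Q^*$ with probability 1.
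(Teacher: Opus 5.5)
Your argument is correct under the assumptions you state explicitly: uniformly bounded $g^\pi$, $\beta_t\to\infty$, and a finite truncated state--action space with bounded rewards. The paper uses the same assumptions, partly implicitly. You also use the same tool, the lemma of \citet{singh2000sarsa} (Lemma~\ref{lem:LM2}) applied to $\Delta_t=Q_t-Q^*$, and the same $O(1/\beta_t)$ bound on the penalty. The real difference is in how conditions 3 and 4 of that lemma are checked.

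The paper writes $\mathrm{E}\{H_t\mid P_t\}=\textbf{B}^*_t[Q_t]-Q^*$. It then identifies $\textbf{B}^*_t[Q_t]$ with $Q^*_t$ and $Q^*$ with $\textbf{B}^*[Q_t]$, which leaves only $\|Q^*_t-Q^*\|_W\le\gamma C/\beta_t$, i.e.\ $\kappa=0$. It gets zero conditional variance the same way, in effect replacing the conditional second moment of the target by the square of its conditional mean. Those identifications hold only at the fixed point $Q_t=Q^*$. They therefore discard exactly the $Q_t$-dependence the lemma is designed to control.

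Your split keeps that dependence: $\|\textbf{B}^*_t[Q_t]-Q^*\|_\infty\le\|\textbf{B}^*_t[Q_t]-\textbf{B}^*[Q_t]\|_\infty+\|\textbf{B}^*[Q_t]-\textbf{B}^*[Q^*]\|_\infty\le\gamma G/\beta_t+\gamma\|\Delta_t\|_\infty$. The contraction of the ordinary Bellman optimality operator supplies $\kappa=\gamma$, and the penalty is only a uniformly vanishing perturbation. This is the mechanism Singh et al.\ use for SARSA under a policy that becomes greedy in the limit. The paper's shortcut would give condition 3 with no contraction argument at all, but it is not valid for general $Q_t$. Your route costs one extra triangle inequality and is sound.

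Two small remarks.

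First, for condition 4, argue directly rather than deferring to the fixed-$\beta$ appendix, which takes $B=-1$. Since $|\max_\pi F^\pi_t(s)|\le\|Q_t\|_\infty+G/\beta_0$, you have $|\eta_t|\le R_{\max}+\gamma\|Q_t\|_\infty+\gamma G/\beta_0+\|Q^*\|_\infty$. Combined with $\|Q_t\|_\infty\le\|\Delta_t\|_\infty+\|Q^*\|_\infty$, this gives $\mathrm{Var}\{\eta_t\mid P_t\}\le K(1+\|\Delta_t\|_\infty)^2$.

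Second, your fallback for unbounded $h$ does not work, so boundedness should be stated as a hypothesis, not treated as a convenience.
\begin{itemize}
\item With the paper's sign (penalty added, then maximized over $\pi$), an $h$ that is infinite off the support of $\mu$ makes $\max_\pi F^\pi_t=+\infty$.
\item With the opposite sign, the maximization is confined to policies supported in the support of $\mu$. Mixing the greedy policy with $\mu$ does not help when the greedy action lies outside that support, so $\max_\pi F^\pi_t$ need not approach $\max_a Q_t$.
\end{itemize}
Full support of $\mu$ is not available here either, since the paper's prior is a deterministic single-period optimal policy. The assumption $|g^\pi|\le C$ that the paper makes is the right one.
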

Theorem~\ref{thm:THMTWO} shows that Q-learning with a scheduled regularization parameter $\beta_t$ with penalties on deviation from a priory policy and infeasibilities still converges, in the limit, to the optimal Q-function. Detailed proof for Theorem \ref{thm:THMTWO} is provided in Appendix~\ref{sec:appendixB}.

\section{Domain knowledge-informed Deep Deterministic Policy gradient (DKDDPG) Algorithm}
While Q-learning is a model-free approach, it suffers from the curse of dimensionality as the classical approaches like dynamic programming. To address this issue, \citet{mnih2015human} implement DQN to approximate Q-values in the Q-learning algorithm using neural networks. In the case of dynamic manufacturing resource matching, the action space is high dimensional, which means DQN may suffer from memory problems since the output layer is the size of the action space.  

The DDPG algorithm employs an actor-critic network, which is a temporal difference (TD) version of the policy gradient. This approach is inspired by the recent success of DQN to train an RL agent to learn the optimal action to maximize the total reward of matching demand-supply for each state. The algorithm is simple to implement and scale since it only needs an actor-critic network and a learning algorithm such as Q-learning. Based on DDPG \citep{lillicrap2015continuous}, we propose domain knowledge-Informed DDPG (DKDDPG) which utilizes the domain knowledge-informed Q-learning update equation introduced in section 3.2, as opposed to DDPG which employs the traditional Q-learning update equation. Our proposed DKDDPG algorithm is presented in Algorithm~\ref{alg:DKDDPG}. The details about action exploration, DNN approximator, domain knowledge-informed Q-value function, scheduling of $\beta$, action transformation, and the algorithm are discussed below.

\textbf{Action exploration:} In every period, the RL agent matches the demand and supply types, assigning the quantities according to policy $\pi$. For action exploration, we employ the $\epsilon-$greedy policy with exponential $\epsilon-$scheduling. During exploration, we inject a normal random noise into the action obtained from the actor network.

\textbf{DNN:} As discussed before, due to the curse of dimensionality, non-linear functions, and neural network approximators can approximate the Q-values in the Q-learning algorithm. Hence, we employ actor-critic networks based on the success of DQN \citep{mnih2015human}, which solved the issues related to non-stationarity and correlations in the observations by proposing target networks and using experience replay memory. We have implemented two DNNs: an actor network and a critic network. The actor network takes the state as the input and gives an action array as the output. Since the DNN does not provide the actions in a usable format for our problem setting, we perform transformations on the action array which we discuss in detail in the action transformation section. The critic network takes the state and action as input and provides the optimal Q-values as output, and acts as a Q-function approximator. The DNNs are trained using random mini-batches taken from the experience replay iteratively until all episodes are complete. 

We design the architectures of our DNNs similar to the ones implemented by \citet{lillicrap2015continuous}. Considering the state and action spaces, our actor DNN consists of 4 fully connected (FC) layers with shape $[s_{dim},50,200,100, a_{dim}]$ where $s_{dim}$ is the dimension of the state array and $a_{dim}$ is the flattened-array dimension of the action matrix, along with 3 ReLU activation layers in between and the output FC has softmax activation. We use softmax over tanh for the output layer to obtain a soft matching policy for each demand and supply type. The final layer weights of both the actor and critic were initialized from a uniform distribution [-0.003, 0.003]. This is to ensure the initial outputs for the policy and value estimates were near zero. Our critic DNN consists of 3 FC layers with shape $[s_{dim},50,100,200]$, along with 2 ReLU activation layers in between and a linear activation output. We have provided a simple illustration of our neural network architecture in Figure \ref{fig:NN}.

    \begin{figure}[htp]
        \centering
        \includegraphics[width=14cm]{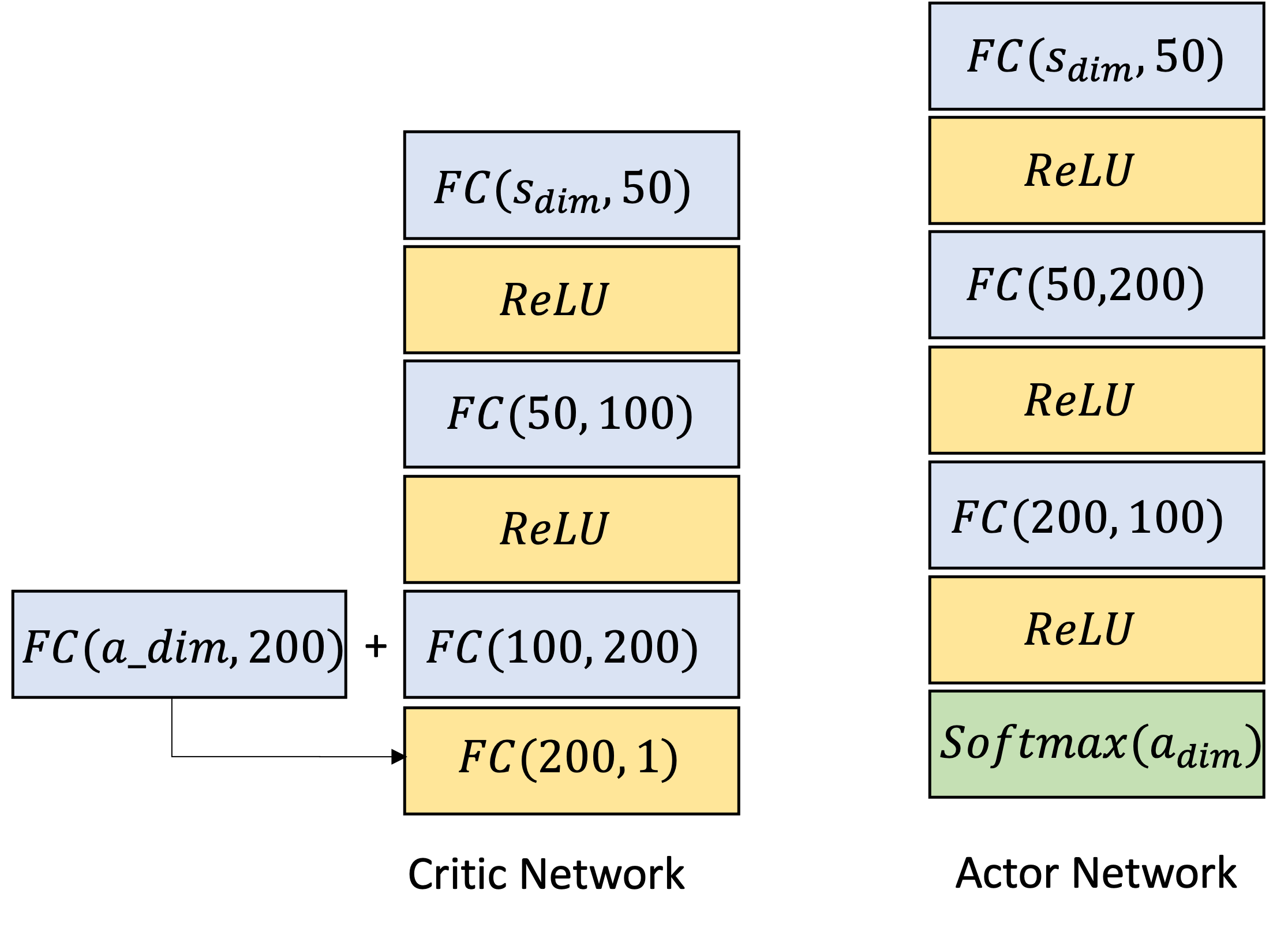}
        \caption{Neural network architecture of DKDDPG}
        \label{fig:NN}
    \end{figure}
    


\textbf{Domain knowledge-informed Q-Value function:} We have incorporated the target value of the domain knowledge-informed Q-learning update equation to step 12 of Algorithm~\ref{alg:DKDDPG}, where the target prediction is given by, 
\begin{align*}
    y_i = r_i + \gamma F^{\pi^*}_i = r_i + \gamma \sum_a \pi^*(a|s) \bigg[\frac{1}{\beta_i} g^{\pi^*}(s, a) + Q'^{\pi^*}(s_{i+1},\mu(s_{i+1}|\theta^{\mu})|\theta^{Q^{'}}) \bigg]
\end{align*} 
where $\pi^* = \argmax_\pi F^\pi_i$ and $Q'$ is the target network with weights $\theta^{Q'}$.  

\textbf{Algorithm:} Algorithm~\ref{alg:DKDDPG} finds weights $\theta^{Q}$ of the critic DNN network to minimize the Euclidean distance between Q-value $Q(s, a;\theta^Q)$ and $y_i$. Our approach uses policy gradient to optimize the weights $\theta^\mu$ of the actor DNN network to maximize the Q-value obtained from the critic. The target networks are updated by having them slowly track the learned network as given in the update equation in Algorithm~\ref{alg:DKDDPG}. This significantly improves the stability of learning of the networks. The action matrices in each training step of the algorithm are obtained by $\epsilon$-greedy policy.

\textbf{Scheduling of $\beta$:} We schedule the relative weight parameter $\beta$ so that the algorithm penalizes the Q-function and prevents it from choosing deterministic policies initially and then over the length of the episodes, it reduces the penalty, thereby reducing the regularization of the Q-function. We use a linear function to schedule $\beta$, i.e. $\beta = kt$, where the hyperparameter $k$ is selected based on a random search from 10 random values followed by a comparison of the total episodic reward over some initial number of episodes.

\begin{algorithm}[htb]
  \caption{Domain knowledge-Informed DDPG}
  \label{alg:DKDDPG}
  \begin{algorithmic}[1]
    \Statex
    \State \textbf{Input: } Number of demand and supply types, number of time periods $T_{max}$, total number of episodes, state space 
    \State Randomly initialize critic network $Q(s,a|\theta^Q)$ and actor $\mu(s|\theta^\mu)$ with weights $\theta^Q$ and $\theta^\mu$
    \State Initialize target network $Q^{'}$ and $\mu^{'}$ with weights $\theta^{Q'} \leftarrow \theta^{Q}$ and $\theta^{\mu'} \leftarrow \theta^{\mu}$
    \State Initialize replay buffer $E$
      \For{episode in total episodes}
      \State Choose a state $s$ arbitrarily from state space
      \For {$t = 1:T_{max}$}
        \State Choose action $a_t$ from $s_t$ using $\epsilon$-greedy policy
        \State Introduce random demand $d$ and take action $a_t$, observe reward $r_t$, next state $s_{t+1}$
        \State Store transition $(s_t,a_t,r_t,s_{t+1})$ in $E$
        \State Sample a random mini-batch of $N$ transitions $(s_i,a_i,r_i,s_{i+1})$ from $E$
        \State Set $y_i = r_i + \gamma \Bigg(\sum_a \pi^*(a|s) \bigg[\frac{1}{\beta_i} g^{\pi^*}(s,a) + Q'^{\pi^*}(s_{i+1},\mu(s_{i+1}|\theta^{\mu})|\theta^{Q^{'}}) \bigg]\Bigg)$
        \State Update critic by minimizing the loss: $L = \frac{1}{N} \sum_{i}(y_i - Q(s_i,a_i|\theta^{Q}))^2$
        \State Update the actor policy using the sampled policy gradient:
        \begin{align*}
            \nabla_{\theta^\mu}J \approx \frac{1}{N}\sum_{i}\nabla_{a}Q(s_i,a_i|\theta^{Q})|_{s=s_{i},a=\mu(s_{i})}\nabla_{\theta^\mu}\mu(s|\theta^{\mu})|_{s_{i}}
        \end{align*}
        \State Update the target networks:
        \begin{align*}
            \theta^{Q^{'}} &\leftarrow \tau \theta^{Q} +(1-\tau)\theta^{Q^{'}}\\
            \theta^{\mu^{'}} &\leftarrow \tau \theta^{\mu} +(1-\tau)\theta^{\mu^{'}}
        \end{align*}
      \EndFor
      \EndFor
  \end{algorithmic}
\end{algorithm}
\textbf{Action transformation and infeasibility penalty:} When performing action exploitation in the $\epsilon-$greedy policy, we use a softmax activation function in the output layer which outputs a vector of probabilities. We then scale the vector accordingly to obtain the quantities matched between the demand and capacity types. The action vector output from the actor network needs to be normalized and then scaled by performing a row-wise multiplication with the state vector $s$, to obtain a scaled action matrix. Since the scaled action matrix could be infeasible in terms of exceeding the total capacity quantity, we employ the infeasibility penalty introduced in section 3.3 into the reward calculation process. This encourages the network to output feasible actions. We demonstrate the above process using a numerical example in our problem setting. Consider a 2x2 matching problem in manufacturing with an outstanding demand vector $x=(12,8)$. Figure~\ref{img:actiontransform} details the action transformation process, where we transform the vector of probabilities to a 2x2 square matrix. This matrix is then normalized over each row signifying the probability of matching each capacity type to each demand type. Finally, we perform a row-wise scalar multiplication of the outstanding demand vector $x$ to the normalized probability matrix and round it to get the desired action matrix displayed in Figure~\ref{img:actiontransform}.

\begin{figure}[htp]
    \centering
    \includegraphics[width=8cm]{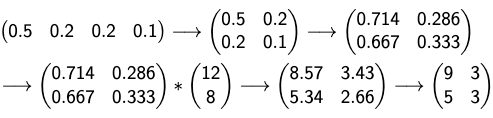}
    \caption{Action transformation for a 2x2 matching example}
    \label{img:actiontransform}
\end{figure}

\section{Computational Study}

In this section, we investigate the performance of domain knowledge-informed DDPG (DKDDPG) with benchmark methods: a solver for LP form of MDP (exact method), Domain Knowledge-informed Q-learning (DKQL), Deep Q-network (DQN), a Deterministic Policy Gradient (DPG) method, and DDPG. We included the tabular version of our modified Q-learning algorithm, DKQL to compare the difference in its performance with its tabular counterpart. Along with the tabular algorithms' comparison, we included two deep RL approaches, namely the DQN and the DPG since the two methods are different in their approach to solving the problem. DQN is a value-based deep RL method that uses a non-linear approximation of the Q-learning update rule as its loss function, and DPG is a policy-based algorithm that learns the near-optimal policy using the principles of policy gradient theorem. To provide an exact solution approach, we have added the linear program formulation and used a solver to obtain the optimal value estimates for small-size problems. Specifically, we first examine the performance of the DKDDPG with that of all the mentioned baselines for small-size problems. For large-size problems that cannot be solved by most of the other methods due to the curse of dimensionality, we compare the performance of the proposed DKDDPG with only the DDPG algorithm.

For all experiments, we consider discrete uniform demands. The lower limits of the demand distributions are set to zero and the upper limits are drawn from a uniform distribution, \(U(0,20)\). The fixed manufacturing capacities are also drawn from this uniform distribution. Since the outstanding demand can go to infinity in theory, we truncate the state space by ignoring states that have little chance to be visited. For simplicity of testing, we assume the number of demand types is the same as that of capacity types (i.e., \(m=n\)). For each problem size, we consider five problem instances that have different demand distributions. In terms of performance metrics, for small-size problems, we consider the average value function estimates (for LP) and average learned Q-value estimates (for all others) over all states, convergence time and the number of episodes to converge. For large-size problems, we compare the convergence time, the number of episodes to converge, and the average learned Q-value estimates over all states for both DKDDPG and DDPG. Although conventionally, the average episodic reward is reported for comparison of algorithm performance, the learned Q-value estimates showcase more stable results and consistent trends as compared to the undiscounted cumulative reward values obtained in an episode. Therefore, we report the learned Q-value estimates in our experiments.

For the model-free RL methods, we train the agent for 3000 episodes and truncate every episode after 500 timesteps. We perform a random search for hyperparameter tuning for DDPG and DKDDPG and obtain the following values for the hyperparameters. The replay memory $E$ is equal to one million most recent experiences, the batch size is 64, the actor learning rate is 0.0001, the critic learning rate is 0.0005, and the soft target update parameter, $\tau$ is 0.0005. We follow an $\epsilon-$greedy policy for action exploration, where $\epsilon$ is annealed exponentially from 1 to 0.1 over the first 300 episodes for small size cases and over the first 1000 episodes for the large size experiments and is fixed thereafter respectively. The computing infrastructure used is an Intel Xeon with 4 cores and 8 logical processors. We consider the maximum run-time to be 150000 seconds for all the algorithms.

\subsection{Performance of DKDDPG in Small size problems}
We compare the performance of DKDDPG with benchmark algorithms including value iteration, Q-learning, and standard DDPG. We test the performance of the algorithms for $m$=2,3,4 and 5. We set the Q-value stopping criterion to be 2\%. Along with the metrics mentioned previously, we also record the percentage difference of the maximum Q-value obtained by Q-learning, DDPG, and DKDDPG, with respect to value estimates obtained through Value Iteration.

\begin{sidewaystable}
\begin{adjustbox}{width=\textwidth,center}
\begin{tabular}{|c|c|cc|cc|cc|cc|cc|cc|cc|}
\hline
\multirow{2}{*}{\begin{tabular}[c]{@{}c@{}}Case\\ (m x n)\end{tabular}} & \multirow{2}{*}{Instance} & \multicolumn{2}{c|}{Linear Programming} & \multicolumn{2}{c|}{Q-learning} & \multicolumn{2}{c|}{DK Q-learning} & \multicolumn{2}{c|}{DPG} & \multicolumn{2}{c|}{DQN} & \multicolumn{2}{c|}{DDPG} & \multicolumn{2}{c|}{DKDDPG} \\ 
\cline{3-16}
 &  & \begin{tabular}[c]{@{}c@{}}Convergence\\ time (s)\end{tabular} & \begin{tabular}[c]{@{}c@{}}Average \\ value\\ estimates\end{tabular} & \begin{tabular}[c]{@{}c@{}}Convergence\\ time (s)\\ /\\ episodes\end{tabular} & \begin{tabular}[c]{@{}c@{}}Average\\Q-values\\/\\\% \\ difference\end{tabular} & \begin{tabular}[c]{@{}c@{}}Convergence\\ time (s)\\ /\\ episodes\end{tabular} & \begin{tabular}[c]{@{}c@{}}Average\\Q-values\\/\\\% \\ difference\end{tabular} & \begin{tabular}[c]{@{}c@{}}Convergence\\ time (s)\\ /\\ episodes\end{tabular} & \begin{tabular}[c]{@{}c@{}}Average\\Q-values\\/\\\% \\ difference\end{tabular} & \begin{tabular}[c]{@{}c@{}}Convergence\\ time (s)\\ /\\ episodes\end{tabular} & \begin{tabular}[c]{@{}c@{}}Average\\Q-values\\/\\\% \\ difference\end{tabular} & \begin{tabular}[c]{@{}c@{}}Convergence\\ time (s)\\ /\\ episodes\end{tabular} & \begin{tabular}[c]{@{}c@{}}Average\\Q-values\\/\\\% \\ difference\end{tabular} & \begin{tabular}[c]{@{}c@{}}Convergence\\ time (s)\\ /\\ episodes\end{tabular} & \begin{tabular}[c]{@{}c@{}}Average\\Q-values\\/\\\%\\difference\end{tabular} \\ 
\hline
\multirow{5}{*}{2 x 2} & 1 & 27662 & 987.4 & 1304/2986 & 917.5/7.1 & 1333/2767 & 924.4/6.4 & 1684/1266 & 971.5/1.6 & 84431/2526 & 969.4/1.8 & 1195/145 & 948.2/3.9 & 1568/138 & 959.0/2.8 \\
 & 2 & 12988 & 700.3 & 2046/2888 & 659.8/5.8 & 2057/2794 & 663.7/5.2 & 1055/802 & 450.1/35.7 & 39287/2355 & 695.4/0.7 & 1279/146 & 687.1/1.9 & 1725/145 & 679.0/3.0 \\
 & 3 & 22956 & 798.6 & 1189/2967 & 746.8/6.5 & 1184/2737 & 751.2/5.9 & 1569/1189 & 699.8/12.4 & 50311/2968 & 796.9/0.2 & 1178/148 & 763.3/4.4 & 1513/133 & 749.8/6.1 \\
 & 4 & 23478 & 890.4 & 1281/2742 & 830.1/6.7 & 1268/2689 & 832.2/6.5 & 2063/1546 & 583.7/34.4 & 93936/2816 & 896.8/0.7 & 1089/142 & 865.9/2.7 & 1555/142 & 863.7/2.9 \\
 & 5 & 19650 & 701.3 & 2040/2966 & 663.1/5.4 & 2046/2876 & 664.7/5.2 & 942/718 & 556.4/20.6 & 26437/2669 & 700.2/0.1 & 1172/155 & 673.2/4.0 & 1568/146 & 680.9/2.9 \\ 
\hline
\multirow{5}{*}{3 x 3} & 1 & 27403 & 504.3 &23413/2925  &453.7/10.1  &24362/2904  &459.2/8.9  & 2552/1918 & 353.7/29.9 &132051/2887  &498.3/1.2  & 1160/150 & 478.2/5.2 & 1590/145 & 480.3/4.7 \\
 & 2 & 10784 & 492.5 &12792/2796  &450.7/8.5  &12932/2642  & 454.6/7.7 & 2151/961 & 481.0/2.3 & 93833/2031 & 473.8/3.8 & 1252/158 & 476.9/3.2 & 1723/154 & 464.8/5.6 \\
 & 3 & 16883 & 502.3 &22119/2874  &451.4/10.1  &23216/2824  &455.3/9.3  & 1284/956 & 349.5/30.4 & 127107/2720 & 500.9/0.3 & 1215/156 & 474.9/5.4 & 1621/147 & 469.5/6.5 \\
 & 4 & 55706 & 651.2 &26229/3000  &596.5/8.4 &26732/3000  &611.2/6.2  & 790/587 & 526.3/19.2 &137214/3000  &646.1/0.8  & 1896/242 & 616.0/5.4 & 1725/154 & 600.6/7.8 \\
 & 5 & 13462 & 491.6 &12053/2986  &449.3/8.6  & 12765/2733 &448.2/8.8  & 3203/2426 & 483.0/1.7 & 94707/2049 & 475.3/3.3 & 1678/152 & 480.4/2.3 & 2293/291 & 474.0/3.6 \\ 
\hline
\multirow{5}{*}{4 x 4} & 1 & \multicolumn{2}{c|}{\multirow{5}{*}{NA}} & \multicolumn{2}{c|}{\multirow{5}{*}{NA}} & \multicolumn{2}{c|}{\multirow{5}{*}{NA}} &4219/2726  &657.2/-  & \multicolumn{2}{c|}{\multirow{5}{*}{NA}} & 2692/333 & 1475.1/- & 2629/233 & 1506.0/- \\
 & 2 & \multicolumn{2}{c|}{} & \multicolumn{2}{c|}{} & \multicolumn{2}{c|}{} &2037/1388  &397.1/-  & \multicolumn{2}{c|}{} & 3016/378 & 767.7/- & 2459/219 & 856.2/- \\
 & 3 & \multicolumn{2}{c|}{} & \multicolumn{2}{c|}{} & \multicolumn{2}{c|}{} &3964/2647  &606.7/-  & \multicolumn{2}{c|}{} & 1683/214 & 1249.0/- & 1689/151 & 1275.3/- \\
 & 4 & \multicolumn{2}{c|}{} & \multicolumn{2}{c|}{} & \multicolumn{2}{c|}{} &1698/1186  &260.8/-  & \multicolumn{2}{c|}{} & 2787/315 & 772.4/- & 2605/227 & 1085.8/- \\
 & 5 & \multicolumn{2}{c|}{} & \multicolumn{2}{c|}{} & \multicolumn{2}{c|}{} &3245/2138  &171.5/-  & \multicolumn{2}{c|}{} & 2386/273 & 835.1/- & 2486/202 & 839.5/- \\ 
\hline
\multirow{5}{*}{5x5} & 1 & \multicolumn{2}{c|}{\multirow{5}{*}{NA}} & \multicolumn{2}{c|}{\multirow{5}{*}{NA}} & \multicolumn{2}{c|}{\multirow{5}{*}{NA}} &2796/1964  &767.8/-  & \multicolumn{2}{c|}{\multirow{5}{*}{NA}} & 2269/259 & 1941.2/- & 3588/301 & 2048.7/- \\
 & 2 & \multicolumn{2}{c|}{} & \multicolumn{2}{c|}{} & \multicolumn{2}{c|}{} &2254/978  &844.5/-  & \multicolumn{2}{c|}{} & 8784/1035 & 1105.4/- & 3533/311 & 1133.0/- \\
 & 3 & \multicolumn{2}{c|}{} & \multicolumn{2}{c|}{} & \multicolumn{2}{c|}{} &2526/1565  &422.4/-  & \multicolumn{2}{c|}{} & 5739/708 & 393.3/- & 3311/298 & 1633.2/- \\
 & 4 & \multicolumn{2}{c|}{} & \multicolumn{2}{c|}{} & \multicolumn{2}{c|}{} &2667/1822  &705.9/-  & \multicolumn{2}{c|}{} & 4753/600 & 1362.2/- & 4545/403 & 1384.5/- \\
 & 5 & \multicolumn{2}{c|}{} & \multicolumn{2}{c|}{} & \multicolumn{2}{c|}{} &977/662  &806.8/-  & \multicolumn{2}{c|}{} & 2914/366 & 932.6/- & 2292/205 & 1056.7/- \\
\hline
\end{tabular}
\end{adjustbox}
\caption{Algorithm Performance in Matching Problem for small size experiments}
\label{tab:smallsize}
\end{sidewaystable}

    

\begin{figure}[!t]
\begin{minipage}[h]{1.0\linewidth}
\centering
\includegraphics[width=\linewidth]{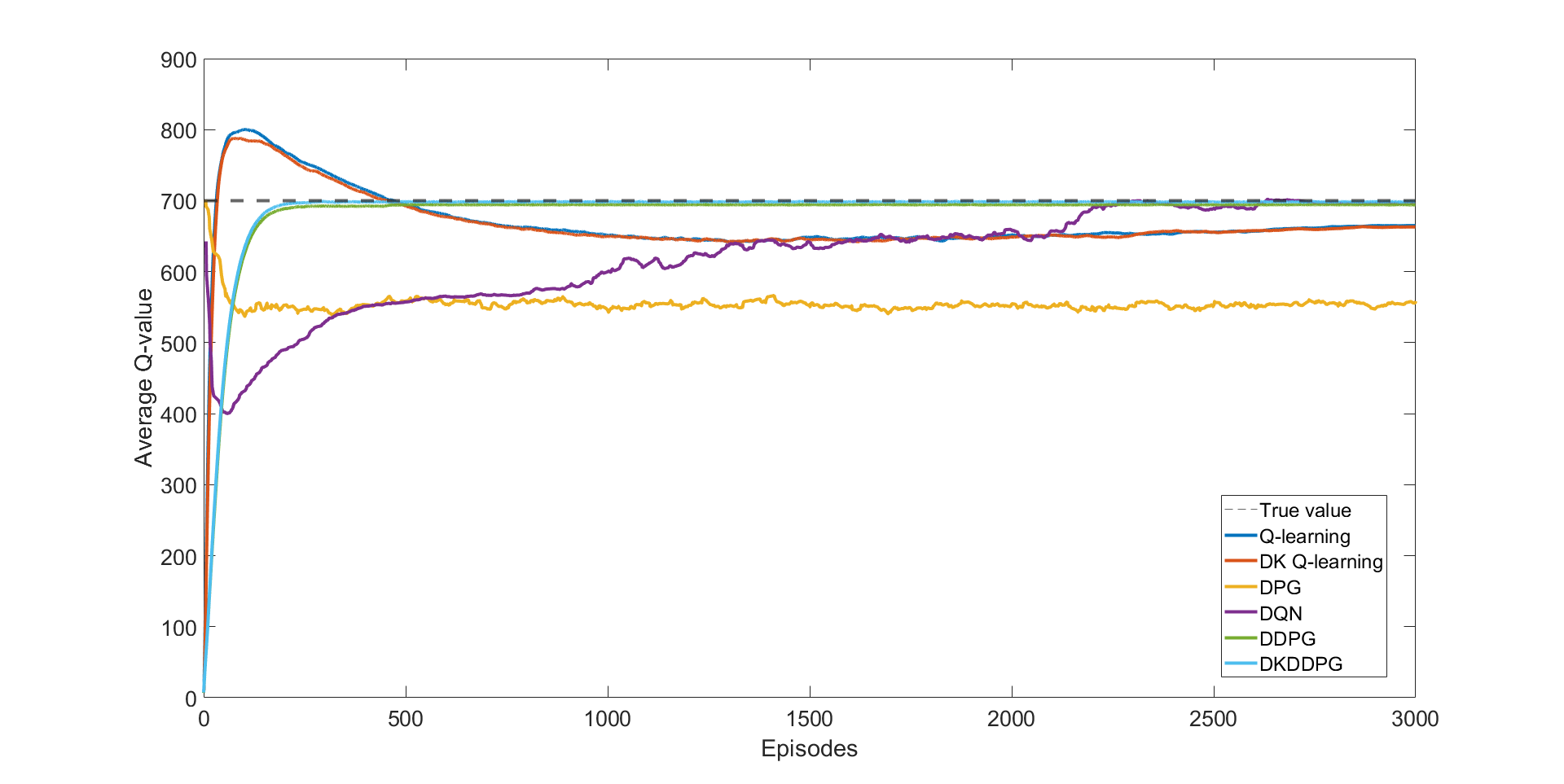}
\subcaption{2x2 Matching problem}
\end{minipage}

\vspace{0.00mm} 

\begin{minipage}[h]{1.0\linewidth}
\centering
\includegraphics[width=\linewidth]{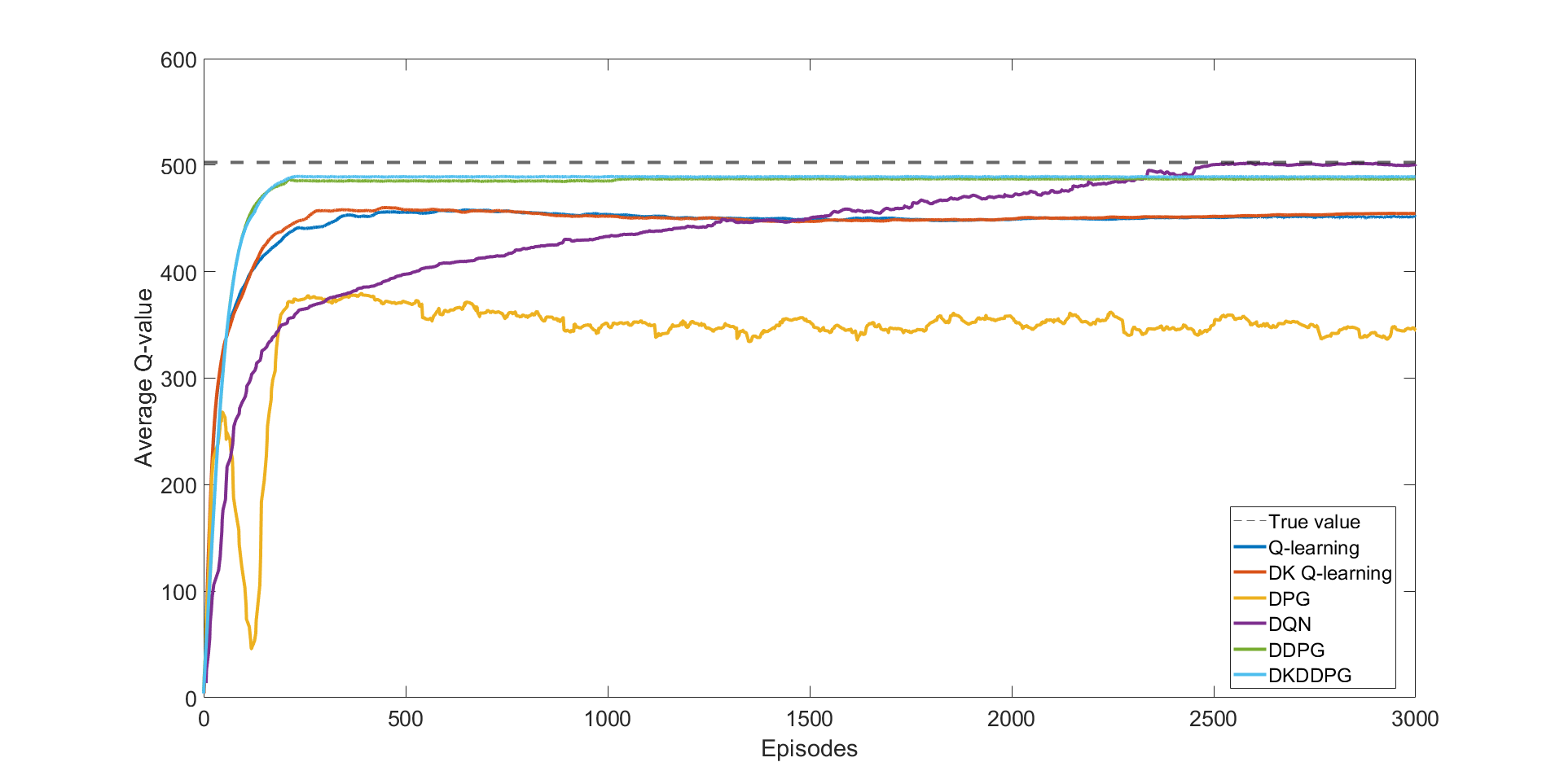}
\subcaption{3x3 Matching problem}
\end{minipage}   


\caption{Comparison of different RL approaches for small-size experiments}
\label{fig:small-case}
\end{figure}


From Table~\ref{tab:smallsize}, we observe that actor-critic methods such as DPG, DDPG and DKDDPG converge in a much shorter time for all the cases as compared to tabular methods such as Q-learning and DK Q-learning, as well as value-based DRL such as DQN. As the size of the cases increases, the time taken to converge by exact methods such as LP and tabular methods such as Q-learning and DK Q-learning increases significantly since they suffer from the curse of dimensionality. For the 4x4 and 5x5 cases, both LP and the tabular RL methods suffer from memory issues and thus, fail to solve the problem in the designated maximum run times or episodes. Moreover, DQN while able to converge quite close to the true Q-values, suffers from similar issues for 4x4 and 5x5 cases due to the large action spaces. On the contrary, DDPG is able to obtain the solutions for all the cases in an average of 2507 seconds and 304 episodes, while DKDDPG is able to solve the matching problem for all the cases taking an average of 2300 seconds and 207 episodes. While DPG also converges at similar times to DDPG and DKDDPG, it obtains much worse values due to correlation and stability issues. Thus, DKDDPG converges approximately 8\% faster than DDPG and takes much fewer episodes to converge. Moreover, Q-values generated by the DKDDPG algorithms converge very close to the values of the LPs as can be seen from the error percentages, implying that by leveraging domain knowledge, the DKDDPG algorithm leads to satisfactory policies while accelerating the converging process. We have also illustrated the training process for two of the small-size experiments in Figure \ref{fig:small-case} to highlight DKDDPG's superior performance for our dynamic matching problem. Among all the methods, only DQN, DDPG, and DKDDPG closely approximate the true value. While DQN generally converges closest to the true value, it typically takes around 2500 episodes to reach near-optimal results. In contrast, DDPG and DKDDPG achieve reasonably accurate solutions in an average of 200-300 episodes, with DKDDPG slightly outpacing DDPG in convergence and yielding slightly better results for smaller cases.

\subsection{Performance of DKDDPG in High-dimensional problems}
Next, we compare the performance of the DKDDPG algorithm with the DDPG algorithm for high dimensional state spaces. Specifically, we consider $m$=10, 15, 20, 25, and 30. We break this section into two parts: First, we perform the training and report the in-sample performance of DKDDPG and DDPG. Second, we check out-of-sample performance by evaluating the trained models of both algorithms on the manufacturing resource matching problem and report the cumulative reward of both models over 500 timesteps.

\subsubsection{In-sample performance of DKDDPG}
For the training phase, we set the Q-value stopping criterion to be 3\%. We chose the L-2 norm as the penalty function $g^\pi(\cdot,\cdot)$. We use the single-period optimal policy discussed in section 3.2.1 as the prior policy $\mu(a|s)$. Since we consider a linearly scheduled $\beta_t$ and run the program for ten random values of $\kappa$, we record the performance metrics for DKDDPG with the $\kappa$ value that obtains the highest average learned Q-values after convergence or program completion. Along with the metrics mentioned before, we also calculate the percentage change for the Q-values, convergence time, and the number of episodes to converge for DKDDPG over DDPG.

\begin{sidewaystable}
\centering
\begin{adjustbox}{width=\textwidth,center}
\begin{tabular}{|c|c|ccc|cccccc|}
\hline
\multirow{2}{*}{\begin{tabular}[c]{@{}c@{}}\\Case\\ (mxn)\end{tabular}} & \multirow{2}{*}{\begin{tabular}[c]{@{}c@{}}\\Instance\end{tabular}} & \multicolumn{3}{c|}{DDPG}                                                                                                                                        & \multicolumn{6}{c|}{DKDDPG}                                                                                                                                                                                                                                                                                                                                                                                  \\ \cline{3-11} 
                                                                      &                           & \begin{tabular}[c]{@{}c@{}}Average \\ Q-values\end{tabular} & \begin{tabular}[c]{@{}c@{}}Time-to-\\convergence\\ (s)\end{tabular} & \begin{tabular}[c]{@{}c@{}}Episodes-to-\\convergence\end{tabular} & \begin{tabular}[c]{@{}c@{}}Average \\ Q-values\end{tabular} & \begin{tabular}[c]{@{}c@{}}Reward\\ Increment/Decrement\\ (\%)\end{tabular} & \begin{tabular}[c]{@{}c@{}}Time-to-\\convergence\\ (s)\end{tabular} & \begin{tabular}[c]{@{}c@{}}Time\\ Increment/Decrement\\ (\%)\end{tabular} & \begin{tabular}[c]{@{}c@{}}Episodes-to-\\convergence\end{tabular}   & \begin{tabular}[c]{@{}c@{}}Episodes\\ Increment/Decrement\\ (\%)\end{tabular} \\ \hline
\multirow{6}{*}{10x10}                                                & 1                         & 556.2                                                             & 25132                                                             & 2871                    & 2674.3                                                             & \textbf{380.8}                                                                        & 18512                                                             & \textbf{-26.3}                                                                     & \multicolumn{1}{c}{1858} & \textbf{-35.3}                                                                         \\
                                                                      & 2                         & 510.8                                                              & 12311                                                             & 1201                    & 2506.7                                                             & \textbf{390.7}                                                                       & 14751                                                             & 19.8                                                                      & \multicolumn{1}{c}{1099} & \textbf{-8.5}                                                                          \\
                                                                      & 3                         & 411.5                                                              & 29712                                                             & 2156                    & 3072.2                                                             & \textbf{646.6}                                                                       & 14414                                                             & \textbf{-51.5}                                                                     & \multicolumn{1}{c}{1057} & \textbf{-50.9}                                                                         \\
                                         &  4                        &  681.3                                                            & 12142                                                             &  1147                   &  3184.5                                                            & \textbf{367.4}                                                                        &  11556                                                            &  \textbf{-4.8}                                                                     & \multicolumn{1}{c}{841} & \textbf{-26.7}                 \\ 
                                         &  5                        &  582.3                                                            & 26055                                                             & 1983                    &  3521.4                                                            & \textbf{504.7}                                                                        & 26679                                                             & 2.4                                                                     & \multicolumn{1}{c}{1668} & \textbf{-15.9}                 \\\hline
                                         & Average                         &  548.4                                                            & 21070                                                             & 1872                    &  2991.8                                                            & \textbf{458.0}                                                                        & 17182                                                             & \textbf{-12.1}                                                                     &  \multicolumn{1}{c}{1305} & \textbf{-27.5}                \\\hline
\multirow{5}{*}{15x15}                                                & 1                         & 886.6                                                             & 53512                                                             & 2227                    & 5189.2                                                             & \textbf{485.3}                                                                        & 52043                                                             & \textbf{-2.7}                                                                      & \multicolumn{1}{c}{1848} & \textbf{-17.0}                                                                         \\
                                                                      & 2                         & 823.9                                                              & 45079                                                             & 2513                    & 6233.4                                                             & \textbf{656.6}                                                                       & 35171                                                             & \textbf{-21.9}                                                                     & \multicolumn{1}{c}{1678} & \textbf{-33.2}                                                                         \\
                                                                      & 3                         & 782.3                                                             & 16687                                                             & 1761                    & 6424.5                                                             & \textbf{721.2}                                                                        & 4367                                                              & \textbf{-73.8}                                                                     & \multicolumn{1}{c}{312}  & \textbf{-82.3}                                                                         \\                              &  4                        &  4765.2                                                            &  32375                                                            & 1931                    &  6625.1                                                            &  \textbf{39.0}                                                                       &  31229                                                            &  \textbf{-3.5}                                                                    & \multicolumn{1}{c}{1575} & \textbf{-18.4}                  \\ 
                                         &  5                        &  976.5                                                            &  36994                                                            &   1771                  &    5243.3                                                          &   \textbf{436.9}                                                                      &   37934                                                           &    2.5                                                                  & \multicolumn{1}{c}{1605} & \textbf{-9.4}                  \\\hline
                                         & Average                         &  1646.9                                                            & 36929                                                             & 2041                    &  5943.1                                                            & \textbf{467.8}                                                                        & 32149                                                             & \textbf{-19.8}                                                                     &  \multicolumn{1}{c}{1404} & \textbf{-32.1}                 \\\hline
\multirow{5}{*}{20x20}                                                & 1                         & 22996.4                                                             & 8862                                                              & 598                     & 55113.2                                                             & \textbf{139.6}                                                                        & 6026                                                              & \textbf{-32.0}                                                                     & \multicolumn{1}{c}{329}  & \textbf{-44.9}                                                                         \\
                                                                      & 2                         & 9702.2                                                             & 10958                                                             & 647                     & 15259.6                                                             & \textbf{57.3}                                                                        & 6424                                                              & \textbf{-41.4}                                                                     & \multicolumn{1}{c}{352}  & \textbf{-45.6}                                                                         \\
                                                                      & 3                         & 4932.1                                                             & 9083                                                              & 612                     & 8912.2                                                             & \textbf{80.7}                                                                        & 8525                                                              & \textbf{-6.2}                                                                      & \multicolumn{1}{c}{469}  & \textbf{-23.4}                                                                         \\ 
                                                                      &  4                        &   911.7                                                           &  53167                                                            &  1546                   &  7189.3                                                            &  \textbf{688.5}                                                                       &  50091                                                            &  \textbf{-5.8}                                                                    & \multicolumn{1}{c}{1416} & \textbf{-8.4}                 \\ 
                                         &  5                        &  5488.1                                                            & 17405                                                              & 918                    & 11016.5                                                             &  \textbf{100.7}                                                                       &   10789                                                           & \textbf{-38.0}                                                                     & \multicolumn{1}{c}{542} & \textbf{-40.9}                 \\\hline
                                         & Average                         &  8806.1                                                            & 19895                                                             & 864                    &  19498.2                                                            & \textbf{213.4}                                                                       & 16371                                                             & \textbf{-24.7}                                                                     &  \multicolumn{1}{c}{622} & \textbf{-32.6}                 \\\hline
\multirow{5}{*}{25x25}                                                & 1                         & 13276.5                                                             & 53347                                                             & 634                     & 19693.2                                                             & \textbf{48.3}                                                                        & 38001                                                             & \textbf{-28.7}                                                                     & \multicolumn{1}{c}{544}  & \textbf{-14.2}                                                                         \\
                                                                      & 2                         & 4475.4                                                             & 51572                                                             & 631                     & 13952.6                                                             & \textbf{211.7}                                                                         & 11799                                                             & \textbf{-77.1}                                                                     & \multicolumn{1}{c}{276}  & \textbf{-56.3}                                                                         \\
                                                                      & 3                         & 1356.3                                                             & 58195                                                             & 655                     & 9635.9                                                             & \textbf{610.4}                                                                        & 10509                                                             & \textbf{-81.9}                                                                     & \multicolumn{1}{c}{258}  & \textbf{-60.6}                                                                         \\ 
                                                                      &  4                        &  12324.7                                                            & 52751                                                             &  624                   &  15097.2                                                            & \textbf{22.5}                                                                        &  9794                                                            &   \textbf{-81.4}                                                                   &  \multicolumn{1}{c}{232} & \textbf{-62.8}                 \\ 
                                         &  5                        &  9546.1                                                            & 16227                                                             &  355                   & 10828.3                                                              &   \textbf{13.4}                                                                      & 23543                                                             &   45.1                                                                   & \multicolumn{1}{c}{415} &  16.9                \\\hline
                                         & Average                         &  8195.8                                                            & 46418                                                             & 580                    &  13841.4                                                            & \textbf{181.3}                                                                       & 18729                                                             & \textbf{-44.8}                                                                     &  \multicolumn{1}{c}{345} & \textbf{-35.4}                 \\\hline
\multirow{5}{*}{30x30}
& 1                        &  4727.5                                                            & 55139                                                             &  717                     & 13024.1                                                               &  \textbf{175.5}                                                                       &  16689                                                            &  \textbf{-69.7}                                                                    & \multicolumn{1}{c}{332} &  \textbf{-53.7}                \\
&  2                        & 21046.2                                                             & 55561                                                             & 720                     & 64567.3                                                               &  \textbf{206.8}                                                                       & 19087                                                             &  \textbf{-65.6}                                                                   & \multicolumn{1}{c}{362} &  \textbf{-49.7}                \\
&  3                        & 3285.4                                                             & 48838                                                             & 664                    &  19604.1                                                           & \textbf{496.7}                                                                        & 13289                                                             &  \textbf{-72.8}                                                                    & \multicolumn{1}{c}{283} &  \textbf{-57.4}                \\
&  4                        & 1751.4                                                             & 58465                                                             &  722                   &  17785.2                                                            &  \textbf{915.5}                                                                       &  58252                                                            & \textbf{-0.36}                                                                     & \multicolumn{1}{c}{692} & \textbf{-4.2}                 \\
&  5                        & 5286.5                                                              & 39259                                                          & 583                    & 19145.3                                                             &  \textbf{262.2}                                                                       & 21992                                                             &  \textbf{-44.0}                                                                    & \multicolumn{1}{c}{396} & \textbf{-32.1}                 \\ \hline
& Average                         &  7219.4                                                            & 51452                                                             & 681                    &  26825.2                                                            & \textbf{411.3}                                                                       & 25862                                                            & \textbf{-50.5}                                                                     &  \multicolumn{1}{c}{413} & \textbf{-39.4}                 \\\hline
\end{tabular}%
\end{adjustbox}
\caption{DKDDPG vs DDPG comparison for large size experiments during training}
\label{tab:largetrain}
\end{sidewaystable}

\begin{figure}[!t]
\begin{minipage}[h]{1.0\linewidth}
\centering
\includegraphics[width=\linewidth]{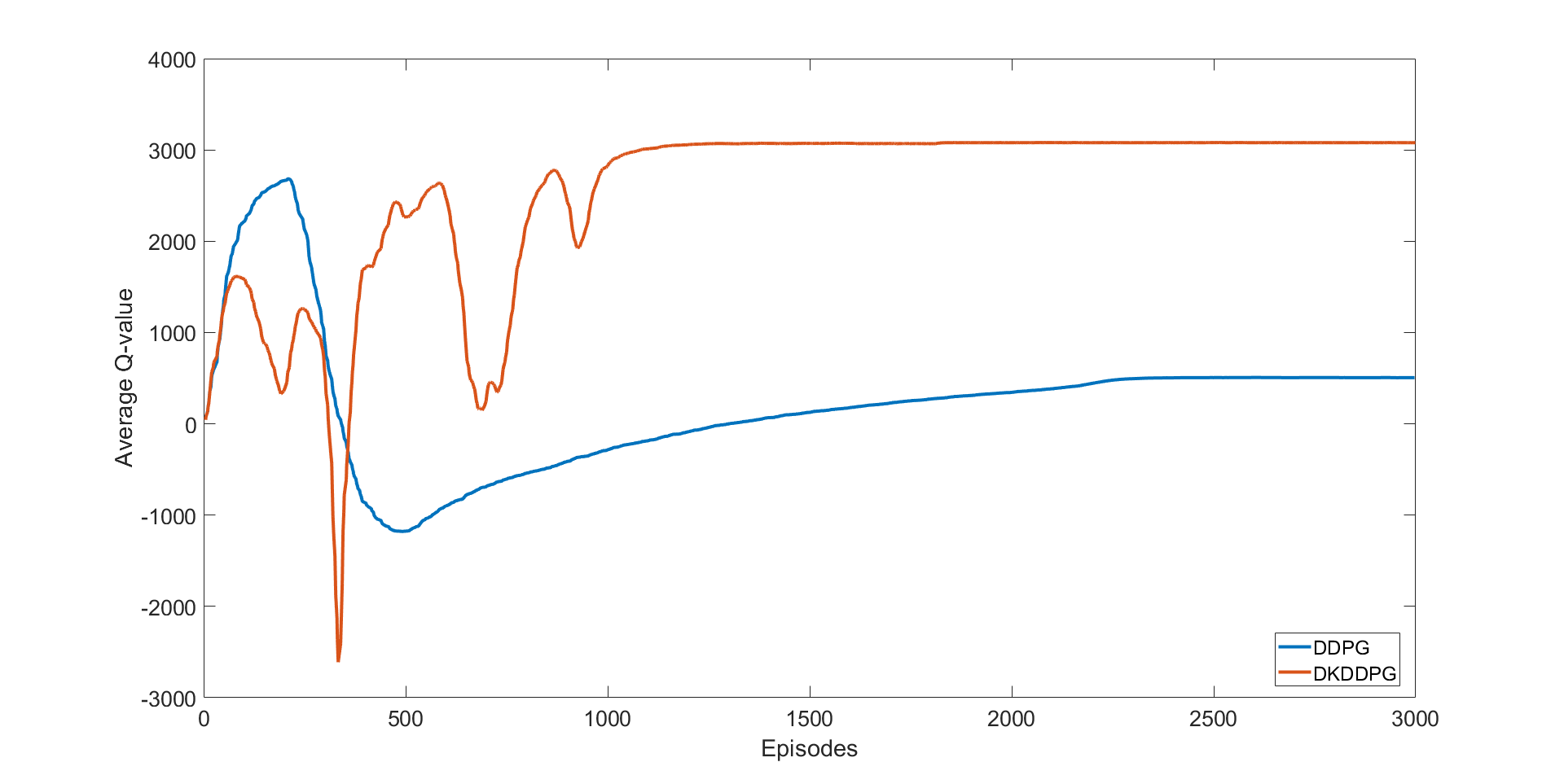}
\subcaption{10x10 Matching Problem}
\end{minipage}

\vspace{0.00mm} 

\begin{minipage}[h]{1.0\linewidth}
\centering
\includegraphics[width=\linewidth]{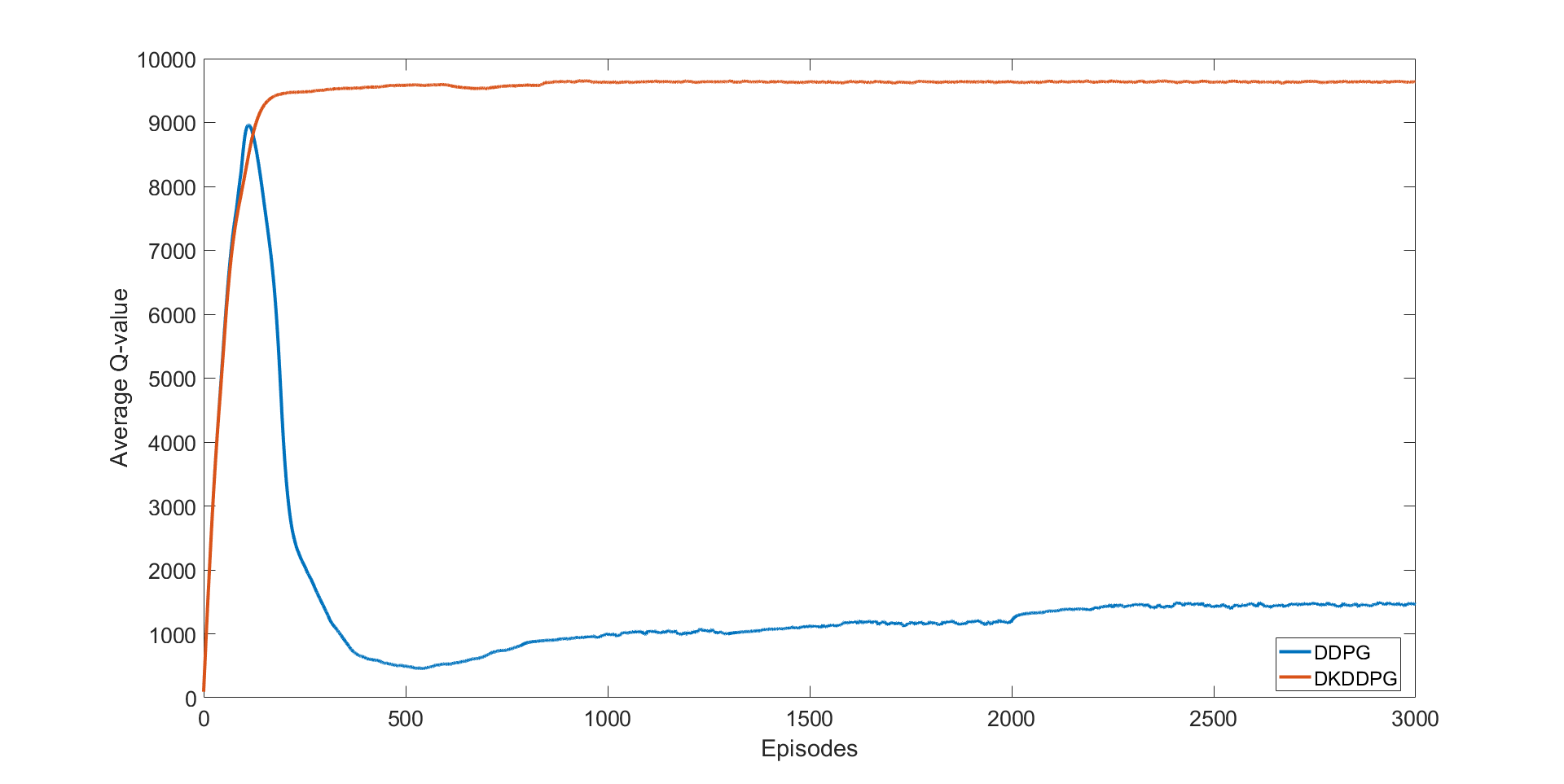}
\subcaption{25x25 Matching problem}
\end{minipage}   


\caption{Comparison of different RL approaches for large-size experiments}
\label{fig:large-case}
\end{figure}


From Table~\ref{tab:largetrain}, we observe that the domain knowledge-informed DDPG achieves convergence approximately 30\% faster in time than DDPG in all the cases. Moreover, DKDDPG takes approximately 36\% lesser number of episodes to solve the matching problem. It is also interesting to note that as the size of the cases increases, the improvement in convergence time and episodes also increases, as observed from the average increment values in time and episode from Table~\ref{tab:largetrain}. This suggests that for larger problems with a high number of demand and capacity types, DKDDPG is much more efficient as compared to DDPG. This validates our hypothesis of adding a prior policy-based penalty to the Q-learning update rule. Upon observing the average learned Q-values, DKDDPG is able to obtain approximately 345\% higher average Q-values as compared to the standard DDPG for all the cases. We note that in some cases, while the DDPG is able to converge sooner than DKDDPG in time, it does so locally and achieves much lower reward values. The accelerated convergence in both time and episode count, along with the increased discounted rewards, indicate that DKDDPG effectively leverages the prior policy-based penalty to mitigate biased estimates and expedite learning. This results in quicker convergence and consistently higher Q-values across all scenarios compared to standard DDPG. Furthermore, in Figure \ref{fig:large-case}, we illustrate the training process for two large-scale experiments, highlighting DKDDPG's superior performance over DDPG in our dynamic matching problem. In both cases presented, DKDDPG consistently achieves significantly higher average Q-values compared to DDPG, which tends to converge to lower values within our 3000 episode limit.

\subsubsection{Out-of-sample performance of DKDDPG}
For the evaluation phase, we test our trained models of DKDDPG and DDPG over a single run of the resource matching problem. We run our trained model while simulating the demand distribution over 500 timesteps. Here, we run the models over five instances with different demand distributions for all the demand-capacity cases, that is, $m=10, 15, 20, 25$ and 30. These instances are separately generated and are thus, different from the ones generated in the training phase. We record the cumulative reward obtained by both the trained models as well as the percentage change of the cumulative reward of DKDDPG over DDPG.
\begin{table}[htb]
\centering
\scalebox{0.6}{
\begin{tabular}{|c|c|c|cc|}
\hline
\multirow{2}{*}{\begin{tabular}[c]{@{}c@{}}Case\\ (mxn)\end{tabular}} & \multirow{2}{*}{Instance} & DDPG                                                               & \multicolumn{2}{c|}{DKDDPG}                                                                                                                      \\ \cline{3-5} 
                                                                        &                           & \begin{tabular}[c]{@{}c@{}}Cumulative Reward\end{tabular} & \begin{tabular}[c]{@{}c@{}}Cumulative Reward\end{tabular} & \begin{tabular}[c]{@{}c@{}}Reward\\ Increment/Decrement\\ (\%)\end{tabular} \\ \hline
\multirow{5}{*}{10x10}                                                  & 1                         & 57640                                                              & 98790                                                              & 71.3                                                                        \\
                                                                        & 2                         & 17700                                                              & 94229                                                              & 432.3                                                                       \\
                                                                        & 3                         & 20097                                                              & 35020                                                              & 74.2                                                                        \\
                                                                        & 4                         & 16136                                                              & 119297                                                             & 639.3                                                                       \\
                                                                        & 5                         & 65906                                                              & 122830                                                             & 86.4                                                                        \\ \hline
                                                            & Average                         &  35496                                                            & 94033                                                             & 260.7\\\hline
\multirow{5}{*}{15x15}                                                  & 1                         & 27545                                                              & 47625                                                              & 72.9                                                                        \\
                                                                        & 2                         & 37118                                                              & 68237                                                              & 83.8                                                                        \\
                                                                        & 3                         & 28546                                                              & 78356                                                              & 174.5                                                                       \\
                                                                        & 4                         & 13447                                                              & 56321                                                              & 318.8                                                                       \\
                                                                        & 5                         & 53487                                                              & 87612                                                              & 63.8                                                                        \\ \hline& Average                         &  32029                                                            & 67630                                                             & 142.8\\\hline
\multirow{5}{*}{20x20}                                                  & 1                         & 28994                                                              & 39554                                                              & 36.4                                                                        \\
                                                                        & 2                         & 36670                                                              & 54933                                                              & 49.8                                                                        \\
                                                                        & 3                         & 30812                                                              & 43090                                                              & 39.9                                                                        \\
                                                                        & 4                         & 47200                                                              & 67478                                                              & 42.9                                                                        \\
                                                                        & 5                         & 41698                                                              & 48933                                                              & 17.3                                                                        \\ \hline& Average                         &  37075                                                            & 50798                                                             & 37.3\\\hline
\multirow{5}{*}{25x25}                                                  & 1                         & 379956                                                             & 527663                                                             & 38.9                                                                        \\
                                                                        & 2                         & 843976                                                             & 976238                                                             & 15.7                                                                        \\
                                                                        & 3                         & 538745                                                             & 679832                                                             & 26.2                                                                        \\
                                                                        & 4                         & 306357                                                             & 475368                                                             & 55.2                                                                        \\
                                                                        & 5                         & 481672                                                             & 538135                                                             & 11.7                                                                        \\ \hline& Average                         &  510141                                                            & 639447                                                             & 29.5\\\hline
\multirow{5}{*}{30x30}                                                  & 1                         & 685083                                                             & 831725                                                             & 21.4                                                                        \\
                                                                        & 2                         & 721876                                                             & 904137                                                             & 25.2                                                                        \\
                                                                        & 3                         & 377772                                                             & 542611                                                             & 43.6                                                                        \\
                                                                        & 4                         & 432175                                                             & 852467                                                             & 97.3                                                                        \\
                                                                        & 5                         & 483467                                                             & 655734                                                             & 35.6                                                                        \\ \hline& Average                         &  540075                                                            & 747335                                                             & 44.6\\\hline
\end{tabular}}
\caption{DKDDPG vs DDPG comparison for large size experiments during evaluation}
\label{tab:largeeval}
\end{table}

From Table~\ref{tab:largeeval}, we observe that the trained model of DKDDPG is able to obtain higher cumulative reward over 500 timesteps as compared to the DDPG model. Specifically, the DKDDPG model achieves an average reward increment of 103\% over DDPG amongst the 5 supply-demand cases.
\section{Conclusion \& Future work}
In this paper, we considered the problem of dynamically matching the demand-capacity types of resources in manufacturing. We formulated the problem as an MDP where the outstanding demand at the end of a period is considered the state and the quantity of demand and capacity matched is the action matrix. To tackle the issue of biased estimates and infeasible actions, we introduced prior policy-based penalty and infeasibility penalty respectively into the traditional Q-learning algorithm. We considered two cases of the regularization parameter $\beta$: constant and scheduling, and further theoretically proved the convergence of our domain knowledge-informed Q-learning algorithm for both $\beta$ cases. To avoid the curse of dimensionality, we proposed the DKDDPG algorithm which utilizes our modified Q-learning update rule. We investigated the performance of our DKDDPG algorithm with some benchmark RL algorithms for both small and large-size experiments and were able to demonstrate improvement in performance and efficiency over those algorithms.

In this paper, we considered a dynamic resource-matching problem in manufacturing for a matching firm with centralized control. It is worth considering a decentralized, cooperative problem with multiple agents having only local information. While our problem considers a single-period lead time, it will be interesting to model a system with a multi-period lead time to accommodate for delays in production due to numerous reasons. We have not considered any matching costs in our system; including a supply-based cost in our reward framework could aid in minimizing the number of suppliers to fulfill a given demand type. We provided theoretical results for domain knowledge-informed Q-learning which establishes a performance guarantee for small-size problems. Future work will consider deriving convergence results for algorithms with function approximations.
\section*{Acknowledgement}
This work is supported in part by the U.S. National Science Foundation under award 2305486.
\appendix
\section{Proof of Theorem~\ref{thm:THMONE}}\label{sec:appendixA}
We shall first state the result by \citet{bertsekas1996neuro} provided as Lemma~\ref{lem:LM1} below.
\begin{lemma}\label{lem:LM1}
Let $f_t$ be the sequence generated by the iteration (15). Given the conditions in Assumption 1 are satisfied, then $f_t$ converges to $f^*$ with probability 1. 
\end{lemma}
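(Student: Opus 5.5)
The plan is the classical stochastic-approximation argument of \citet{bertsekas1996neuro} (Prop.~4.4, with boundedness from Prop.~4.7). Throughout, $f^*$ is the fixed point in Assumption~\ref{asm:ASMONE}(d). After the change of variables $f_t \mapsto f_t - f^*$ and $H \mapsto H(\cdot + f^*) - f^*$, I may assume $f^*=0$. Rescaling each coordinate by $\xi(i)$ turns $\|\cdot\|_\xi$ into the plain maximum norm, so I also take $\xi \equiv 1$. Part (d) then reads $\|Hf\|_\infty \le L\|f\|_\infty$ for all $f$, with $L<1$.

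\textbf{Step 1 (boundedness).} First I show $\sup_t \|f_t\|_\infty < \infty$ almost surely. Fix a large level $G_0$. Whenever $\|f_t\|_\infty$ exceeds the current level $G_t$, I raise the level by a factor $(1+\epsilon)$, and I rescale the noise as $\tilde\omega_t = \omega_t/G_t$. Part (c) makes the conditional variance of $\tilde\omega_t$ bounded by a constant. By (a) and (b), the accumulated rescaled noise $\tilde W_{t+1}(i) = (1-\gamma_t(i))\tilde W_t(i) + \gamma_t(i)\tilde\omega_t(i)$ then converges to $0$ a.s., by the martingale convergence theorem applied to $\sum_t \gamma_t \tilde\omega_t$. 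Since $L<1$, the contraction pulls $\|f_t\|_\infty$ back below $(1+\epsilon)G_t$ once the rescaled noise is small. Hence the level can be raised only finitely often, which gives boundedness.

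\textbf{Step 2 (nested boxes).} With $D_0 := \sup_t\|f_t\|_\infty < \infty$, I choose $\epsilon>0$ with $\beta := L + 2\epsilon < 1$ and set $D_{k+1} = \beta D_k$. By induction on $k$, I show that there is an a.s. finite time $t_k$ with $\|f_t\|_\infty \le D_k$ for all $t \ge t_k$. Given the claim for $k$, I split, for $t \ge t_k$,
\begin{align*}
f_t(i) = Y_t(i) + W_t(i), \qquad W_{t+1}(i) = (1-\gamma_t(i))W_t(i) + \gamma_t(i)\omega_t(i), \quad W_{t_k}(i)=0.
\end{align*}
Part (c) together with boundedness gives bounded conditional variance, so $W_t(i)\to 0$ a.s. by the same martingale argument as in Step 1. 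The remainder satisfies
\begin{align*}
Y_{t+1}(i) = (1-\gamma_t(i))Y_t(i) + \gamma_t(i)(Hf_t)(i), \qquad |(Hf_t)(i)| \le L D_k .
\end{align*}
So $|Y_t(i)|$ is dominated by the deterministic recursion $Z_{t+1} = (1-\gamma_t)Z_t + \gamma_t L D_k$, started at $Z_{t_k} = D_k$. Since $\sum_t \gamma_t(i) = \infty$, this recursion converges to $L D_k$. Hence eventually $|Y_t(i)| \le (L+\epsilon)D_k$ and $|W_t(i)| \le \epsilon D_k$, which gives $\|f_t\|_\infty \le D_{k+1}$ and completes the induction. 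Since $D_k \to 0$, we get $f_t \to 0 = f^*$ with probability 1.

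\textbf{Main obstacle.} I expect Step 1 to be the hardest. Part (c) only bounds the noise variance by $A + B\|f_t\|^2$, so the martingale argument is not available until boundedness is known, and the rescaling must be done carefully to break this circularity. If that argument becomes unwieldy, I would instead cite Prop.~4.7 of \citet{bertsekas1996neuro} directly for boundedness under Assumption~\ref{asm:ASMONE}. Step 2 is then routine.
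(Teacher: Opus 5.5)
Your proposal is correct and follows the paper's route. The paper gives no argument of its own for Lemma~\ref{lem:LM1}; it simply invokes Prop.~4.4 of \citet{bertsekas1996neuro}, and your boundedness-by-rescaling step followed by the nested-box induction with the split $f_t = Y_t + W_t$ is precisely that cited proof. The points you gloss over are routine bookkeeping: the level must jump to the smallest $G_0(1+\epsilon)^k \ge \|f_t\|_\infty$, since one noisy step can inflate $\|f_t\|_\infty$ by any factor; the variance bound $A + B\|f_t\|^2$ is controlled only by a random constant, so you localize on $\{\sup_t \|f_t\|_\infty \le C\}$; and noise sums started at the non-stopping times $t_k$ are handled pathwise via $W_{t_k;t}(i) = W_{0;t}(i) - \prod_{\tau=t_k}^{t-1}(1-\gamma_\tau(i))\,W_{0;t_k}(i)$.
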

Notice that at the optimal policy, the optimal domain knowledge-informed Q-function satisfies
\begin{align*}
    Q^*(s,a) &= r + \gamma \max_\pi[F^\pi(s')|s,a]\\
             &\equiv \textbf{B}^*[Q^*]_{(s,a)}
\end{align*}

The contraction property of operator $\textbf{B}^*$ defined above can be proven similarly as done by \citet{fox2015taming}.
\thmone*
\begin{proof}
We can see that the 1st assumption above is the same as condition (a) stated in Assumption~\ref{asm:ASMONE}. Since $\textbf{B}^*$ is a contraction mapping, $\textbf{B}^*$ is automatically a pseudo-contraction, satisfying condition (d) of Assumption~\ref{asm:ASMONE}.

We now verify the assumptions on the noise variable $\omega_t$. Using the definition of $\textbf{B}^*$, we automatically get \begin{align*}
    \mathrm{E}[\omega_t(r_t,s_{t+1})|\mathcal{F}_t]&=\mathrm{E}\big[-\textbf{B}^*[Q_t]_{(s_t,a_t)} + r_t + \gamma \mathrm{E}_p[F^\pi(s_{t+1})|s_t,a_t]\big|\mathcal{F}_t\big]\\
                                                   &= -\textbf{B}^*[Q_t]_{(s_t,a_t)} + \mathrm{E}\big[r_t + \gamma \mathrm{E}_p[F^\pi(s_{t+1})|s_t,a_t]\big|\mathcal{F}_t\big]\\
                                                   &= 0
\end{align*}
Hence, the condition (b) of Assumption~\ref{asm:ASMONE} is satisfied. Now, we shall verify condition (c).
\begin{align*}
    \mathrm{E}[\omega_t^2(r_t,s_{t+1})|\mathcal{F}_t] &= \mathrm{E}\Big[\big(-\textbf{B}^*[Q_t]_{(s_t,a_t)} + r_t + \gamma \mathrm{E}_p[F^\pi(s_{t+1})|s_t,a_t]\big)^2\Big|\mathcal{F}_t\Big]\\
    &= \mathrm{E}\Big[\big(-\textbf{B}^*[Q_t]_{(s_t,a_t)}\big)^2 + \big(r_t + \gamma \mathrm{E}_p[F^\pi(s_{t+1})|s_t,a_t]\big)^2\\ &\qquad - 2\textbf{B}^*[Q_t]_{(s_t,a_t)}\big(r_t + \gamma \mathrm{E}_p[F^\pi(s_{t+1})|s_t,a_t]\big)\Big|\mathcal{F}_t\Big]
    \intertext{Taking expectation for each term, we get}
    &=\mathrm{E}\Big[\big(-\textbf{B}^*[Q_t]_{(s_t,a_t)}\big)^2 \Big|\mathcal{F}_t\Big]
    + \mathrm{E}\Big[\big(r_t + \gamma \mathrm{E}_p[F^\pi(s_{t+1})|s_t,a_t]\big)^2 \Big|\mathcal{F}_t\Big]
    \\ &\qquad - 2
    \mathrm{E}\Big[\big(\textbf{B}^*[Q_t]_{(s_t,a_t)}\big)\big(r_t + \gamma \mathrm{E}_p[F^\pi(s_{t+1})|s_t,a_t]\big)\Big|\mathcal{F}_t\Big]\\
    &=\big(-\textbf{B}^*[Q_t]_{(s_t,a_t)}\big)^2
    + \mathrm{E}\Big[\big(r_t + \gamma \mathrm{E}_p[F^\pi(s_{t+1})|s_t,a_t]\big)^2 \Big|\mathcal{F}_t\Big]
    \\ &\qquad - 2
    \mathrm{E}\Big[\big(\textbf{B}^*[Q_t]_{(s_t,a_t)}\big)\Big|\mathcal{F}_t\Big]\mathrm{E}\Big[\big(r_t + \gamma \mathrm{E}_p[F^\pi(s_{t+1})|s_t,a_t]\big)\Big|\mathcal{F}_t\Big]\\
    &= \big(\textbf{B}^*[Q_t]_{(s_t,a_t)}\big)^2
     + \mathrm{E}\Big[\big(r_t + \gamma \mathrm{E}_p[F^\pi(s_{t+1})|s_t,a_t]\big)^2 \Big|\mathcal{F}_t\Big]
    - 2 \big(\textbf{B}^*[Q_t]_{(s_t,a_t)}\big)\big(\textbf{B}^*[Q_t]_{(s_t,a_t)}\big)\\
    &= \mathrm{E}\Big[\big(r_t + \gamma \mathrm{E}_p[F^\pi(s_{t+1})|s_t,a_t]\big)^2 \Big|\mathcal{F}_t\Big] - \big(\textbf{B}^*[Q_t]_{(s_t,a_t)}\big)^2\\
    &\leq A + B\big|\big|Q_t\big|\big|^2
\end{align*}
where $A\geq \mathrm{E}\Big[\big(r_t + \gamma \mathrm{E}_p[F^\pi(s_{t+1})|s_t,a_t]\big)^2 \Big|\mathcal{F}_t\Big] $ and $B=-1$ are constants. Since, all the conditions of Assumption~\ref{asm:ASMONE} are satisfied, hence using Lemma~\ref{lem:LM1} we conclude that $Q_t$ converges to $Q^*$ with probability 1. 
\end{proof}
\setcounter{equation}{0}
\renewcommand{\theequation}{\thesection.\arabic{equation}}
\section{Proof of Theorem~\ref{thm:THMTWO}}\label{sec:appendixB}
We use a result provided by \citet{singh2000sarsa} denoted as Lemma~\ref{lem:LM2} to help prove Theorem \ref{thm:THMTWO}.
\begin{lemma}\label{lem:LM2}
Consider a stochastic process $(\alpha_t, \Delta_t, H_t), t\geq0$, where $\alpha_t,\Delta_t, H_t:X \longrightarrow \mathcal{R}$ satisfy the equations
\begin{align*}
    \Delta_{t+1}(x) = (1-\alpha_t(x))\Delta_t(x) + \alpha_t(x)H_t(x), \qquad x \in X, \qquad t=0,1,2,\dotso
\end{align*}
Let $P_t$ be a sequence of increasing $\sigma-$fields such that the $\alpha_0$ and $\Delta_0$ are $P_0-$measurable and $\alpha_t,\Delta_t$ and $H_{t-1}$ are $P_t-$measurable, $t=1,2,\dotso$. Assume that the following hold:
\begin{enumerate}
    \item the set X is finite.
    \item $0\leq \alpha_t(x) \leq 1, \sum_t \alpha_t(x)=\infty, \sum_t \alpha_t^2(x) < \infty$ w.p.1.
    \item $||E\{H_t(\cdot)|P_t\}||_W \leq \kappa||\Delta_t||_W + c_t$, where $\kappa \in [0,1)$ and $c_t$ converges to zero w.p.1.
    \item $Var\{H_t(X)|P_t\} \leq K(1 + ||\Delta_t||_W)^2$, where $K$ is some constant.
\end{enumerate}
Then, $\Delta_t$ converges to zero with probability one(w.p.1).
\end{lemma}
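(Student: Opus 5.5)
The plan is the classical stochastic-approximation argument of Jaakkola, Jordan and Singh, on which \citet{singh2000sarsa} base their lemma: bound the mean part of $H_t$ by the contraction condition and show the martingale-difference part averages out. Throughout, $\|\cdot\|_W$ is the weighted maximum norm from condition 3, and all statements are componentwise over the finite set $X$.

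\emph{Step 1: split $H_t$ into its conditional mean and noise.} Write $H_t(x)=\mathrm{E}\{H_t(x)|P_t\}+w_t(x)$, so that $\mathrm{E}\{w_t(x)|P_t\}=0$ and, by condition 4, $\mathrm{E}\{w_t^2(x)|P_t\}\le K(1+\|\Delta_t\|_W)^2$. Then split $\Delta_t=\delta_t+W_t$, where
\begin{align*}
W_{t+1}(x)&=(1-\alpha_t(x))W_t(x)+\alpha_t(x)w_t(x), \qquad W_0=0,\\
\delta_{t+1}(x)&=(1-\alpha_t(x))\delta_t(x)+\alpha_t(x)\,\mathrm{E}\{H_t(x)|P_t\}.
\end{align*}

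\emph{Step 2: boundedness (the main obstacle).} I first show that $\sup_t\|\Delta_t\|_W<\infty$ w.p.1. Without this, the variance bound in condition 4 does not make the noise summable. I would follow the rescaling argument of Bertsekas--Tsitsiklis (Prop.~4.7):
\begin{itemize}
\item Fix $G>0$ with $\kappa(1+\epsilon)<1$ for some $\epsilon>0$.
\item Whenever $\|\Delta_t\|_W$ exceeds the current level $G_k$, set $G_{k+1}=G_k(1+\epsilon)$ and rescale the iterate by $G_k$.
\item The rescaled process satisfies the same recursion, with the noise variance now bounded by a constant.
\item By the martingale result of Step 3 applied to the rescaled noise, the rescaled noise eventually stays below $\epsilon$.
\item Condition 3 then forces the rescaled iterate back inside the ball of radius $1$.
\end{itemize}
Together these show that the levels $G_k$ can increase only finitely often. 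This bookkeeping, including the $c_t$ term, which is eventually small relative to $G_k$, is the delicate part.

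\emph{Step 3: the noise vanishes.} On the event $\sup_t\|\Delta_t\|_W\le C$, the conditional variances of $w_t$ are bounded by $K(1+C)^2$. Since $\sum_t\alpha_t^2(x)<\infty$, the process $\sum_t\alpha_t(x)w_t(x)$ is an $L^2$-bounded martingale. A standard lemma (Bertsekas--Tsitsiklis, Cor.~4.1), which uses $\sum_t\alpha_t(x)=\infty$ and $0\le\alpha_t\le1$, then gives $W_t(x)\to0$ w.p.1 for each $x$. Since $X$ is finite, $\|W_t\|_W\to0$.

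\emph{Step 4: the mean part contracts.} Let $D_0=C$ and $D_{k+1}=(\kappa+\epsilon')D_k$, with $\epsilon'>0$ chosen so that $\kappa+\epsilon'<1$. Suppose $\|\Delta_t\|_W\le D_k$ for all $t\ge\tau_k$. Choose $t$ large enough that $\|W_t\|_W$ and $c_t$ are each below $\epsilon' D_k/4$. Condition 3 gives, for each $x$,
\begin{align*}
|\delta_{t+1}(x)|\le (1-\alpha_t(x))|\delta_t(x)|+\alpha_t(x)\,w(x)\big(\kappa D_k+\tfrac{\epsilon'}{4}D_k\big),
\end{align*}
where $w(x)$ denotes the weight of the $W$-norm. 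Since $\sum_t\alpha_t(x)=\infty$, comparison with this scalar recursion shows that eventually $|\delta_t(x)|\le w(x)(\kappa+\epsilon'/2)D_k$. Adding back $W_t$ yields $\|\Delta_t\|_W\le D_{k+1}$ for all $t\ge\tau_{k+1}$, for some finite $\tau_{k+1}$. Because $D_k\to0$, induction on $k$ gives $\Delta_t\to0$ w.p.1.
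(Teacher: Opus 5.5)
The paper gives no proof of this lemma. It imports it from \citet{singh2000sarsa}, where it is obtained by extending the stochastic-approximation argument of Jaakkola, Jordan and Singh (1994) and \citet{bertsekas1996neuro} to allow the vanishing perturbation $c_t$. Your sketch reconstructs that standard argument and is correct in outline: boundedness by rescaling, almost-sure vanishing of the martingale part, then a shrinking-box induction that absorbs $c_t$ once $c_t<\epsilon' D_k/4$. Compared with the bare citation, it shows where each hypothesis is actually used. Condition 4 controls the noise only after $\sup_t\|\Delta_t\|_W<\infty$ has been established. The term $c_t$ only needs to vanish pathwise, not at any particular rate.

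Two pieces of bookkeeping need fixing when you write it out, though neither is a real gap.

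\emph{Step 2.} The level cannot be raised by a single factor $1+\epsilon$. It must jump to the smallest $G_0(1+\epsilon)^j$ dominating $\|\Delta_{t+1}\|_W$, as in Bertsekas--Tsitsiklis. Otherwise the invariant $\|\Delta_t\|_W\le(1+\epsilon)G_t$ can fail, and with it the uniform bound on the conditional variance of $w_t/G_t$. Also apply the martingale lemma to the restarted process $W_{t_0;t}$, started at $0$ at time $t_0$. It is uniformly small for $t\ge t_0$ because $W_{t_0;t}=W_t-\prod_{s=t_0}^{t-1}(1-\alpha_s)\,W_{t_0}$.

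\emph{Step 3.} The sum $\sum_t\alpha_t(x)w_t(x)$ need not be an $L^2$-bounded martingale. The step sizes $\alpha_t$ are random, and $\sum_t\alpha_t^2(x)<\infty$ holds only almost surely. Instead, work with the truncated noise $w_t\mathbf{1}\{\|\Delta_t\|_W\le C\}$, which is still a martingale difference because the indicator is $P_t$-measurable. Then use the conditional-variance form of the martingale convergence theorem, or Cor.~4.1 of Bertsekas--Tsitsiklis directly, and take a union over integer $C$.
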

\thmtwo*
\begin{proof}
Subtracting the optimal $Q^*$ on both sides of equation (9), we get
\begin{align*}
    \Delta_{t+1}(s_t,a_t) &= (1-\alpha_t)\Delta_t(s_t,a_t) + \alpha_t(r_t + \gamma \max_\pi F^\pi_t(s_{t+1}) - Q^*(s_t,a_t))
    \intertext{where, $\Delta_t(s_t,a_t) = Q_t(s_t,a_t)-Q^*(s_t,a_t)$. We shall denote $H_t(s_t,a_t) = r_t + \gamma \max_\pi F^\pi_t(s_{t+1}) - Q^*(s_t,a_t)$. Thus, rewriting it, we get}
    \Delta_{t+1}(s_t,a_t) &= (1-\alpha_t)\Delta_t(s_t,a_t) + \alpha_t H_t(s_t,a_t)
\end{align*}

Notice that the optimal Q-function for a given $\beta_t$ satisfies
\begin{align}
    Q^*_t(s_t,a_t) &= r + \gamma\max_\pi F^\pi_t(s_{t+1}) \label{eq:defBstarT1}\\
               &= r + \gamma\max_\pi \sum_{a_{t+1}} \pi(a_{t+1}|s_{t+1}) \bigg[\frac{1}{\beta_t} g^\pi(s_{t+1},a_{t+1}) + Q^\pi_t(s_{t+1},a_{t+1}) \bigg]\\
             &\equiv \textbf{B}^*_t[Q_t]_{(s_t,a_t)} \label{eq:defBstarT3}
\end{align}

Now, as the value of $\beta_t$ increases to a large number, $\frac{1}{\beta_t} g^\pi(s_{t+1},a_{t+1}) \longrightarrow 0$. So, the optimal Q-function as $\beta_t \longrightarrow \infty$ is given by
\begin{align}
    Q^*(s_t,a_t) 
             &= r + \gamma \max_\pi \sum_{a'} \pi(a_{t+1}|s_{t+1})Q^\pi_t(s_{t+1},a_{t+1}) \label{eq:defBstar1}\\
             &\equiv \textbf{B}^*[Q_t]_{(s_t,a_t)} \label{eq:defBstar2}
\end{align}
We verify all the assumptions in Lemma~\ref{lem:LM2} to prove the convergence of domain knowledge-inspired Q-learning with scheduled $\beta_t$. Assumptions 1 and 2 of Lemma~\ref{lem:LM2} are trivially satisfied for our algorithm. We need to check assumptions 3 and 4 in Lemma~\ref{lem:LM2}. Let $P_t = \{Q_{0}(s,a),\dotso,Q_{t}(s,a),H_{0}(s,a),\\\dotso,H_{t}(s,a)\}$, for $s \in \mathcal{S}, a \in \mathcal{A}$. Then,
\begin{align*}
    \big|\big|\mathrm{E}\{H_t(s_t,a_t)|P_t\}\big|\big|_W &= \big|\big|\mathrm{E}\{r + \gamma \max_\pi F^\pi_t(s_{t+1}) - Q^*(s_t,a_t)|P_t\}\big|\big|_W\\
       &= \big|\big|\mathrm{E}\{r + \gamma \max_\pi F^\pi_t(s_{t+1})|P_t\} - \mathrm{E}\{Q^*(s_t,a_t)|P_t\}\big|\big|_W
       \intertext{Using Equation~\ref{eq:defBstarT1} and \ref{eq:defBstarT3}}
       &= \big|\big|\textbf{B}^*_t[Q_t]_{(s_t,a_t)} - Q^*(s_t,a_t)\big|\big|_W
       \intertext{Adding and subtracting $Q^*_t$,}
       &=\big|\big|\textbf{B}^*_t[Q_t]_{(s_t,a_t)} - Q^*_t(s_t,a_t) + Q^*_t(s_t,a_t) - Q^*(s_t,a_t)\big|\big|_W\\
       &\leq \big|\big|\textbf{B}^*_t[Q_t]_{(s_t,a_t)} - Q^*_t(s_t,a_t)\big|\big|_W + \big|\big|Q^*_t(s_t,a_t) - Q^*(s_t,a_t)\big|\big|_W
       \intertext{Applying the definitions from Equation~\ref{eq:defBstar2} to the first term,}
       &= \big|\big|\textbf{B}^*_t[Q_t]_{(s_t,a_t)} - \textbf{B}^*_t [Q_t]_{(s_t,a_t)}\big|\big|_W + \big|\big|Q^*_t(s_t,a_t) - Q^*(s_t,a_t)\big|\big|_W\\
       &= \big|\big|Q^*_t(s_t,a_t) - Q^*(s_t,a_t)\big|\big|_W
       \intertext{By definitions \ref{eq:defBstarT3} and \ref{eq:defBstar2},}
       |Q^*_t(s_t,a_t)-Q^*(s_t,a_t)| &= |\textbf{B}^*_t[Q_t]_{(s_t,a_t)}-\textbf{B}^*[Q_t]_{(s_t,a_t)}|\\
       &= \Big|\gamma \max_\pi F^\pi_t(s_{t+1})-\gamma \max_\pi \sum_{a_{t+1}}\pi(a_{t+1}|s_{t+1})Q^\pi_t(s_{t+1},a_{t+1})\Big|\\
       &\leq \gamma \max_\pi \Big|F^\pi_t(s_{t+1})-\sum_{a_{t+1}}\pi(a_{t+1}|s_{t+1})Q^\pi_t(s_{t+1},a_{t+1})\Big|\\
       &\leq \gamma \max_\pi \sum_{a_{t+1}}\pi(a_{t+1}|s_{t+1})\Big|\frac{1}{\beta_t}g^\pi(s_{t+1},a_{t+1})+Q^\pi_t(s_{t+1},a_{t+1})-Q^\pi_t(s_{t+1},a_{t+1})\Big|\\
        &=\gamma \max_\pi \sum_{a_{t+1}}\pi(a_{t+1}|s_{t+1})\Big|\frac{1}{\beta_t}g^\pi(s_{t+1},a_{t+1})\Big|
       \end{align*}
Assume $g^\pi(s_{t+1},a_{t+1})$ is bounded by a constant $C$ for all $\pi,s_{t+1},a_{t+1}$. It follows that
    \begin{align*}
                        |Q^*_t(s_t,a_t)-Q^*(s_t,a_t)| &\leq \frac{\gamma}{\beta_t}C
                        \intertext{Therefore, we have}
                        \big|\big|\Delta^*_t(s_t,a_t)\big|\big|_W &= \big|\big|Q^*_t(s_t,a_t) - Q^*(s_t,a_t)\big|\big|_W\\ &\leq \frac{\gamma}{\beta_t}C \longrightarrow 0.
                        \intertext{Thus,}
                        \big|\big|\mathrm{E}\{H_t(s_t,a_t)|P_t\}\big|\big|_W &\leq \kappa \big|\big| \Delta_t(s_t,a_t) \big|\big|_W + c_t
\end{align*}
where $c_t=\gamma\big|\big| - \Delta^*_t(s_t,a_t) \big|\big|_W + \big|\big| \Delta^*_t(s_t,a_t) \big|\big|_W$ which converges to zero w.p.1 and $\kappa=0$ in the 1st term. Thus, assumption 3 is verified.
We shall now verify assumption 4 in Lemma~\ref{lem:LM2}.
\begin{align*}
    Var\{H_t(s_t,a_t)|P_t\} &= \mathrm{E}\{H_t^2(s_t,a_t)|P_t\} - \mathrm{E}\{H_t(s_t,a_t)|P_t\}^2\\
                            &= \mathrm{E}\big\{\big(r + \gamma \max_\pi F^\pi_t(s_{t+1}) - Q^*(s_t,a_t)\big)^2|P_t\big\}\\
                            &\qquad \qquad - \mathrm{E}\big\{r + \gamma \max_\pi F^\pi_t(s_{t+1}) - Q^*(s_t,a_t)|P_t\big\}^2\\
                            &= \mathrm{E}\big\{\big(r+\gamma \max_\pi F^\pi_t(s_{t+1})\big)^2 + \big(Q^*(s_t,a_t)\big)^2 - 2\big(r+\gamma \max_\pi F^\pi_t(s_{t+1})\big)Q^*(s_t,a_t)|P_t\big\}\\
                            &\qquad \qquad - \mathrm{E}\big\{r + \gamma \max_\pi F^\pi_t(s_{t+1}) - Q^*(s_t,a_t)|P_t\big\}^2\\
                            &= \mathrm{E}\big\{\big(r+\gamma \max_\pi F^\pi_t(s_{t+1})\big)^2|P_t\big\} + \mathrm{E}\big\{\big(Q^*(s_t,a_t)\big)^2|P_t\big\} \\
                            &\qquad \qquad -2 \mathrm{E}\big\{\big(r+\gamma \max_\pi F^\pi_t(s_{t+1})\big)Q^*(s_t,a_t)|P_t\big\} - \big(\mathrm{E}\big\{\big(r + \gamma \max_\pi F^\pi_t(s_{t+1})\big)|P_t\big\}\\
                            &\qquad \qquad \qquad -\mathrm{E}\big\{Q^*(s_t,a_t)|P_t\big\}\big)^2
                            \intertext{Applying the definitions \ref{eq:defBstarT3} and \ref{eq:defBstar2},}
                            &= \big(\textbf{B}^*_t[Q_t]_{(s_t,a_t)}\big)^2 + \big(\textbf{B}^*[Q_t]_{(s_t,a_t)}\big)^2 - 2\textbf{B}^*_t[Q_t]_{(s_t,a_t)}\textbf{B}^*[Q_t]_{(s_t,a_t)}\\
                            &\qquad \qquad- \big(\textbf{B}^*_t[Q_t]_{(s_t,a_t)}-\textbf{B}^*[Q_t]_{(s_t,a_t)}\big)^2
                            \intertext{Combining the first 3 terms above,}
                            &= \big(\textbf{B}^*_t[Q_t]_{(s_t,a_t)}-\textbf{B}^*[Q_t]_{(s_t,a_t)}\big)^2 - \big(\textbf{B}^*_t[Q_t]_{(s_t,a_t)}-\textbf{B}^*[Q_t]_{(s_t,a_t)}\big)^2\\
                            &= K(1 + ||\Delta_t||_W)^2
\end{align*}
where $K=0$, hence making it zero variance. Thus, assumption 4 is verified. Since all the assumptions are verified, $Q_t \longrightarrow Q^*$ w.p.1 by Lemma~\ref{lem:LM2}.
\end{proof}
\bibliography{Refpapers}
\end{document}